\title{Training Subset Selection for Weak Supervision}
\author{%
  Hunter Lang \\
  MIT CSAIL\\
  \texttt{hjl@mit.edu} \\
  \And
  Aravindan Vijayaraghavan\\
  Northwestern University\\
  \texttt{aravindv@northwestern.edu}
  \And
  David Sontag\\
  MIT CSAIL\\
  \texttt{dsontag@mit.edu}
}
\DeclareMathOperator*{\argmax}{\text{argmax}}
\DeclareMathOperator*{\argmin}{\text{argmin}}
\DeclareMathOperator{\NN}{\mathrm{NN}}
\DeclareMathOperator*{\VC}{\mathrm{VC}}
\newcommand{\cX}{\mathcal{X}}
\newcommand{\cH}{\mathcal{H}}
\newcommand{\cR}{\mathcal{R}}
\newcommand{\cF}{\mathcal{F}}
\newcommand{\cY}{\mathcal{Y}}
\newcommand{\cT}{\mathcal{T}}
\newcommand{\cO}{\mathcal{O}}
\newcommand{\cI}{\mathcal{I}}
\newcommand{\ind}{\mathbbm{1}}
\newcommand{\FPR}{\mathrm{FPR}}
\newcommand{\FNR}{\mathrm{FNR}}
\newcommand{\vzero}{^{(0)}}
\newcommand{\vone}{^{(1)}}
\newcommand{\abst}{\varnothing}
\newcommand{\cS}{\mathcal{S}}
\newcommand{\tblnum}[2]{\text{#1}_{\text{#2}}}
\renewcommand{\P}{\mathbb{P}}
\newcommand{\condE}[2]{\mathbb{E}\left[#1\;\middle|\;#2\right]}
\newcommand{\condP}[2]{\mathbb{P}\left[#1\;\middle|\;#2\right]}
\newcommand{\err}{\mbox{err}}
\newtheorem{theorem}{Theorem}
\newtheorem*{proposition*}{Proposition}
\newtheorem{lemma}{Lemma}
\newtheorem{assumption}{Assumption}
\newtheorem{corollary}{Corollary}
\newtheorem*{theorem*}{Theorem}
\newtheorem*{assumption*}{Assumption}
\begin{document}

\maketitle
\setlength{\abovedisplayskip}{2pt}
\setlength{\belowdisplayskip}{2pt}

\begin{abstract}
  Existing weak supervision approaches use all the data covered by weak signals
  to train a classifier.  We show both theoretically and empirically
  that this is not always optimal.  Intuitively, there is a tradeoff between the
  amount of weakly-labeled data and the precision of the weak labels.  We
  explore this tradeoff by combining pretrained data representations with the \emph{cut statistic}
  \citep{muhlenbach2004identifying} to
  select (hopefully) high-quality subsets of the weakly-labeled training data.
  Subset selection %
  applies to any label model and classifier and is very simple to plug in to
  existing weak supervision pipelines, requiring just a few lines of code.\footnote{\texttt{\href{https://github.com/hunterlang/weaksup-subset-selection}{https://github.com/hunterlang/weaksup-subset-selection}}}
  We show our subset selection method improves the
  performance of weak supervision for a wide range of label models,
  classifiers, and datasets.  Using \emph{less}
  weakly-labeled data improves the accuracy of weak supervision pipelines by up
  to 19\% (absolute) on benchmark tasks.
\end{abstract}

\section{Introduction}
Due to the difficulty of hand-labeling large amounts of training data, an
increasing share of models are trained with \emph{weak supervision}
\citep{ratner2016data, ratner2017snorkel}.  Weak supervision uses expert-defined
``labeling functions'' to programatically label a large amount of training data
with minimal human effort.  This \emph{pseudo}-labeled training data is
used to train a classifier (e.g., a deep neural network) as if it were
hand-labeled data.

Labeling functions are often simple, coarse rules, so the pseudolabels derived from them are not always correct.
There is an intuitive tradeoff between the
\emph{coverage} of the pseudolabels (how much pseudolabeled data do we
use for training?) and the \emph{precision} on the covered set (how accurate are
the pseudolabels that we do use?).
Using all the pseudolabeled training data ensures the best possible generalization to the population pseudolabeling function $\hat{Y}(X)$.
On the other hand, if we can select a high-quality subset of the pseudolabeled
data, then our training labels $\hat{Y}(X)$ are closer to the true label $Y$,
but the smaller training set may hurt generalization.
However, existing weak supervision approaches such as Snorkel \citep{ratner2017snorkel}, MeTaL \citep{ratner2019training}, FlyingSquid \citep{fu2020fast}, and Adversarial Label Learning
\citep{arachie2019adversarial} use \emph{all} of the pseudolabeled data to train
the classifier, and do not explore this tradeoff.%

We present numerical experiments demonstrating that the status quo of using all
the pseudolabeled data is nearly always suboptimal.
Combining good pretrained representations with the \emph{cut statistic}
\citep{muhlenbach2004identifying} for subset selection, we obtain subsets of the weakly-labeled training data where the weak labels are very accurate.
By choosing examples with the same pseudolabel as many of their nearest neighbors in the representation, the cut statistic uses the representation's \emph{geometry} to identify these accurate subsets without using any ground-truth labels.
Using the smaller but higher-quality training sets selected by the cut statistic improves the accuracy of
weak supervision pipelines by up to 19\% accuracy (absolute).
Subset selection applies to any ``label model'' (Snorkel, FlyingSquid,
majority vote, etc.) and any classifier, since it is a modular, intermediate
step between creation of the pseudolabeled training set and training.
We conclude with a theoretical analysis of a special case of
weak supervision where the precision/coverage tradeoff can be made precise.

\section{Background}
  \label{sec:background}
The three components of a weak supervision pipeline are the \emph{labeling
  functions}, the \emph{label model}, and the \emph{end model}.
The labeling functions are maps $\Lambda_k: \cX \to \cY \cup \{\abst\}$, where
$\abst$ represents abstention.
For example, for sentiment analysis, simple \emph{token-based} labeling functions are effective, such as:
\[
\Lambda_1(x) = \begin{cases}
1 & \text{``good''} \in x\\
\abst & \text{otherwise}
\end{cases}\qquad
\Lambda_2(x) = \begin{cases}
-1 & \text{``bad''} \in x\\
\abst & \text{otherwise}
\end{cases}
\]
If the word ``good'' is in the input text $x$, labeling function $\Lambda_1$
outputs $1$; likewise when ``bad'' $\in x$, $\Lambda_2$ outputs $-1$.
Of course, an input text could contain both ``good'' and ``bad'', so $\Lambda_1$
and $\Lambda_2$ may conflict.
Resolving these conflicts is the role of the \emph{label model}.

Formally, the label model is a map $\hat{Y}: (\cY\cup \{\abst\})^K \to \cY\cup
\{\abst\}$.
That is, if we let $\mathbf{\Lambda}(x)$ refer to the vector
$(\Lambda_1(x),\ldots,\Lambda_K(x))$, then $\hat{Y}(\mathbf{\Lambda}(x))$ is a single
pseudolabel (or ``weak label'') derived from the vector of $K$ labeling function outputs.
This resolves conflicts between the labeling functions.
Note that we can also consider $\hat{Y}$ as a deterministic function of $X$.
The simplest label model is \emph{majority vote}, which outputs the most common
label from the set of non-abstaining labeling functions:
\[
\hat{Y}_{MV}(x) = \mathrm{mode}(\{\Lambda_k(x) : \Lambda_k(x) \ne \abst\})
\]
If all the labeling functions abstain (i.e., $\Lambda_k(x) = \abst$ for all
$k$), then $\hat{Y}_{MV}(x) = \abst$.
More sophisticated label models such as Snorkel \citep{ratner2016data} and
FlyingSquid \citep{fu2020fast} parameterize $\hat{Y}$ to learn better
aggregation rules, e.g. by accounting for the accuracy of different
$\Lambda_k$'s or accounting for correlations between pairs ($\Lambda_j$,
$\Lambda_k$).
These parameters are learned using unlabeled data only; the methods for doing so
have a rich history dating back at least to \citet{dawid1979maximum}.
Many label models (including Snorkel and its derivatives) output a ``soft''
pseudolabel, i.e., a distribution $\hat{P}[Y|\Lambda_1(X),\ldots,
\Lambda_K(X)]$, and set the hard pseudolabel as $\hat{Y}(X) = \argmax_y \hat{P}[Y=y|\Lambda_1(X),\ldots,
\Lambda_K(X)]$.

Given an unlabeled sample $\{x_i\}_{i=1}^n$, the label model produces a pseudolabeled training set $\cT = \{(x_i,
\hat{Y}(x_i)): \hat{Y}(x_i) \ne \abst\}$.
The final step in the weak supervision pipeline is to use $\cT$ like regular
training data to train an \emph{end model} (such as a deep neural network),
minimizing the zero-one loss:
\begin{equation}
    \label{eqn:end-model-objective}
\hat{f} := \argmin_{f\in\cF} \frac{1}{|\cT|}\sum_{i=1}^{|\cT|}\mathbb{I}[f(x_i) \ne \hat{Y}(x_i)]
\end{equation}
or a convex surrogate like cross-entropy.
For many applications, we fine-tune a \emph{pretrained} representation instead of training from scratch.
For example, on text data, we can fine-tune a pretrained BERT model.
We refer to the pretrained representation used by the end model as the \emph{end model representation}, where applicable.

Notably, all existing methods use the full pseudolabeled training set $\cT$ to train the end model.
$\cT$ consists of \emph{all} points where $\hat{Y} \ne \abst$.
In this work, we experiment with methods for choosing higher-quality subsets
$\cT' \subset \cT$ and use $\cT'$ in \eqref{eqn:end-model-objective} instead of
$\cT$.

\paragraph{Related Work.}
The idea of selecting a subset of high-quality training data for use in
fully-supervised or semi-supervised learning algorithms has a long history.
It is also referred to as \emph{data pruning} \citep{angelova2005pruning}, and a significant amount of work has focused on removing mislabeled examples to improve the training process \citep[e.g.,][]{patrini2017making, li2019dividemix, cheng2021learning, northcutt2021confident}.
These works do not consider the case where the pseudolabels come from deterministic labeling functions, and most try to estimate parameters of a specific noise process that is assumed to generate the pseudolabels.
Many of these approaches require iterative learning or changes to the loss function, whereas typical weak supervision pipelines do one learning step and little or no loss correction.
\citet{maheshwari-etal-2021-semi} study \emph{active} subset selection for weak supervision, obtaining a small number of human labels to boost performance.
In concurrent work studying different datasets, \citet{mekala-etal-2022-lops} empirically evaluate the coverage/precision tradeoff of a different selection rule based on the learning order of the end model.

\looseness=-1
In \emph{self-training} \citep[e.g.,][]{scudder1965probability}, an initial
labeled training set is iteratively supplemented with the pseudolabeled examples
where a trained model is most confident (according to the model's probability
scores).
The model is retrained on the new training set in each step.
\citet{yarowsky1995unsupervised} used this approach starting
from a \emph{weakly}-labeled training set; \citet{yu2021fine, karamanolakis2021self} also combine
self-training with an initial weakly-labeled training set,
and both have deep-model-based procedures for selecting confident data in each round.
We view these weakly-supervised self-training methods as orthogonal to our approach, since their \emph{main} focus is on making better use of the data that is not covered by weak rules, not on selecting good pseudolabeled subsets.
Indeed, we show in Appendix \ref{apdx:cutstat-self-train} that combining our method with these approaches improves their performance.
Other selection schemes, not based on model confidence, have also been
investigated for self-training \citep[e.g.,][]{zhou2012self,mukherjee2020uncertainty}.

\citet{muhlenbach2004identifying} introduced the \emph{cut statistic} as a
heuristic for identifying mislabeled examples in a training
dataset. \citet{li2005setred} applied the cut statistic to self-training, using
it to select high-quality pseudolabeled training data for each
round. \citet{zhang2011cotrade} applied the cut statistic to co-training and
used learning-with-noise results from \citet{angluin1988learning} to optimize
the amount of selected data in each round.
\citet{lang2022co} also used co-training and the cut statistic to co-train large language models such as GPT-3 \citep{brown2020language} and T0 \citep{sanh2022multitask} with smaller models such as BERT \citep{devlin-etal-2019-bert} and RoBERTa \citep{liu2019roberta}.
These previous works showed that the cut statistic performs well in
\emph{iterative} algorithms such as self-training and co-training; we show that
it works well in \emph{one-step} weak supervision settings, and that it performs
especially well when combined with modern pre-trained representations.
Our empirical study shows that this combination is very effective at selecting
good pseudolabeled training data \emph{across a wide variety of label models,
  end models, and datasets}.

As detailed in Section \ref{sec:selection-methods}, the performance of the cut
statistic relies on a good representation of the input examples $x_i$ to find good subsets.
\citet{zhu2021good} also used representations to identify subsets of mislabeled labels and found that methods based on representations outperform methods based on model predictions alone.
They use a different ranking method and do not evaluate in weakly-supervised settings.
\citet{chen2022shoring} also use pretrained representations to improve the
performance of weak supervision.
They created a new representation-aware label model that uses nearest-neighbors
in the representation to label \emph{more} data and also learns finer-grained
label model parameters.
In contrast, our approach applies to any label model, can be implemented in a
few lines of code, and
does not require representations from very large models like GPT-3 or CLIP.
Combining the two approaches is an interesting direction for future work.

\section{Subset Selection Methods for Weak Supervision}
\label{sec:selection-methods}

In this work, we study techniques for selecting high-quality subsets of the pseudolabeled training set $\cT$.
We consider two simple approaches to subset selection in this work: \emph{entropy scoring} and the \emph{cut statistic}.
In both cases, we construct a subset $\cT'$ by first ranking all the examples in $\cT$,
then selecting the top $\beta$ fraction according to the ranking.
In our applications, $0 < \beta \le 1$ is a hyperparameter tuned using a
validation set.
Hence, instead of $|\cT|$ covered examples for training the end model in
\eqref{eqn:end-model-objective}, we use $\beta |\cT|$ examples.
Instead of a single, global ranking, subset selection can easily be stratified to use multiple rankings.
For example, if the true label balance $\P[Y]$ is known, we can use separate rankings for each set $\cT_y = \{x_i : \hat{Y}(x_i) = y\}$ and select the top $\beta\P[Y=y]|\cT|$ points from each $\cT_y$.
This matches the pseudolabel distribution on $\cT'$ to the true marginal $\P[Y]$.
For simplicity, we use a global ranking in this work, and our subset selection does not use $\P[Y]$ or any other information about the true labels.
Below we give details for the entropy and cut statistic rankings.

\textbf{Entropy score.}
Entropy scoring only applies to label models that output a ``soft'' pseudo-label
$\hat{P}[Y | \mathbf{\Lambda}(X)]$.
For this selection method, we rank examples by the Shannon entropy of the soft label,
$
H(\hat{P}[Y | \mathbf{\Lambda}(x_i)]),
$
and set $\cT'$ to the $\beta|\cT|$ examples with the lowest entropy.
Intuitively, the label model is the ``most confident'' on the examples with the lowest entropy.
If the label model is well-calibrated, the weak labels should be more accurate on these examples.

\textbf{Cut statistic \citep{muhlenbach2004identifying}.}
\begin{figure}[t]
  \centering
  \includegraphics[width=1.0\textwidth]{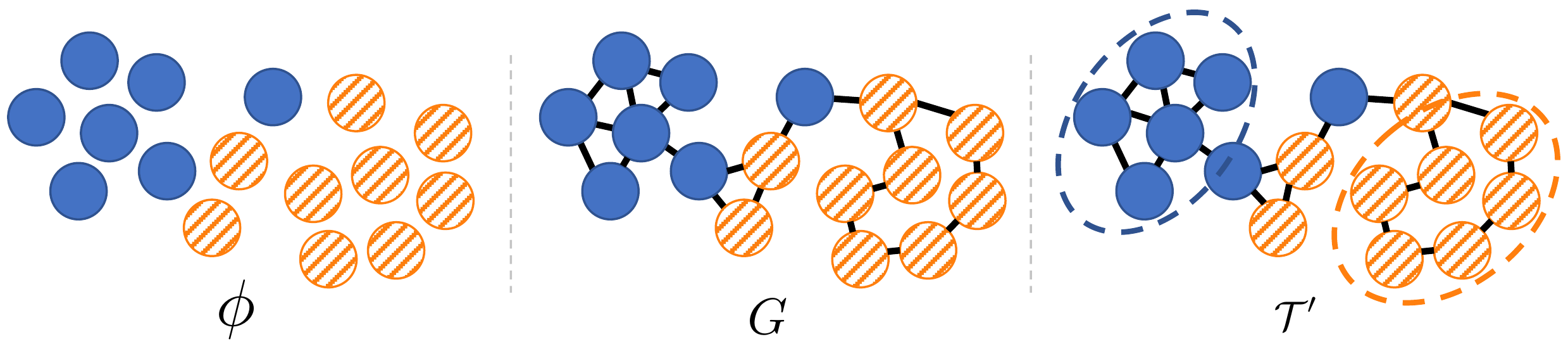}
  \caption{Cut statistic procedure. A representation $\phi$ is used to compute
    the nearest-neighbor graph $G$. Nodes that have the same pseudolabel as most
    of their neighbors are chosen for the subset $\cT'$.}
  \label{fig:cutstat}
  \vspace{-5mm}
\end{figure}
Unlike the entropy score, which only relies on the soft label distribution
$\hat{P}[Y|\mathbf{\Lambda}]$, the cut statistic relies on a good
\emph{representation} of the input examples $x_i$.
Let $\phi$ be a representation for examples in $\cX$.
For example, for text data, $\phi$ could be the hidden state of the \texttt{[CLS]} token in the last layer of a pretrained large language model.

Recall that $\cT = \{(x_i, \hat{Y}(x_i)) : \hat{Y}(x_i) \ne \abst\}$.
To compute the cut statistic using $\phi$, we first form a graph $G = (V,E)$
with one vertex for each covered $x_i$ and edges connecting vertices who are
$K$-nearest neighbors in $\phi$.
That is, for each example $x_i$ with $\hat{Y}(x_i) \ne \abst$, let
\[
  \NN_{\phi}(x_i) = \{x_j : (x_i, x_j) \text{ are } K \text{-nearest-neighbors in }
  \phi\}.
\]
Then we set $V = \{i : \hat{Y}(x_i) \ne \abst\}$, $E = \{(i,j) : x_i \in \NN_{\phi}(x_j) \text{ or } x_j \in
\NN_{\phi}(x_i)\}$.
For each node $i$, let $N(i) = \{j : (i,j) \in E\}$ denote its neighbors in $G$.
We assign a weight $w_{ij}$ to each edge so that nodes closer together in $\phi$
have a higher edge weight:
$
  w_{ij} = (1+||\phi(x_i) - \phi(x_j)||_2)^{-1}.
$
We say an edge $(i, j)$ is \emph{cut} if $\hat{Y}(x_i) \ne \hat{Y}(x_j)$, and
capture this with the indicator variable $I_{ij} := \mathbb{I}[\hat{Y}(x_i) \ne \hat{Y}(x_j)]$.
As suggested in Figure \ref{fig:cutstat}, if $\phi$ is a good representation, nodes with few incident cut edges
should have high-quality pseudolabels---these examples have the same label as most of
their neighbors.
On the other hand, nodes with a large number of cut edges likely correspond to mislabeled examples.
The cut statistic heuristically quantifies this idea to produce a ranking.

Suppose (as a null hypothesis) that the labels $\hat{Y}$ were sampled
i.i.d. from the marginal distribution $\P[\hat{Y} = y]$.
Large deviations from the null should represent the most noise-free vertices.
For each vertex $i$, consider the test statistic:
$
J_i = \sum_{j\in N(i)}w_{ij}I_{ij}.
$
The mean of $J_i$ under the null hypothesis is:
$\mu_i = (1-\P[\hat{Y}(x_i)])\sum_{j \in N(i)}w_{ij},$
and the variance is:
$\sigma^2_i = \P[\hat{Y}(x_i)](1-\P[\hat{Y}(x_i)])\sum_{j \in N(j)}w_{ij}^2.$
Then for each $i$ we can compute the Z-score $Z_i = \frac{J_i -
  \mu_i}{\sigma_i}$ and rank examples by $Z_i$.
Lower is better, since nodes with the smallest $Z_i$ have the least noisy
$\hat{Y}$ assignments in $\phi$.
As with entropy scoring, we set $\cT'$ to be the the $\beta|T|$ points with the
smallest values of $Z_i$.
We provide code for a simple ($<$ 30 lines) function to compute the $Z_i$ values
given the representations $\{\phi(x_i) : x_i\}$ in Appendix \ref{sec:cut-statistic-code}.
Calling this function makes it very straightforward to incorporate the cut
statistic in existing weak supervision pipelines.
Since the cut statistic does not require soft pseudolabels, it can also be used
for label models that only produce hard labels, and for label models such as Majority Vote, where the soft label tends to be badly miscalibrated.

\subsection{Cut Statistic Selects Better Subsets}
To explore the two scoring methods, we visualize how $\cT'$ changes with $\beta$ for entropy scoring and the cut statistic.
We used label models such as majority vote
and Snorkel \citep{ratner2016data} to
obtain soft labels $\hat{P}[Y|\mathbf{\Lambda}(x_i)]$, and set $\hat{Y}(x_i)$ to be the argmax of the soft label.
We test using the \textbf{Yelp} dataset from the WRENCH weak supervision benchmark \citep{zhang2021wrench}.
The task is sentiment analysis, and the eight labeling functions $\{\Lambda_1,\ldots,\Lambda_8\}$ consist of seven keyword-based rules and one third-party sentiment polarity model.
For $\phi$ in the cut statistic, we used the \texttt{[CLS]} token representation
of a pretrained BERT model.
Section \ref{sec:experiments} contains more details on the datasets and the cut statistic setup.

\begin{figure}[t]
  \centering
  \begin{subfigure}{1.0\textwidth}
  \includegraphics[width=\textwidth]{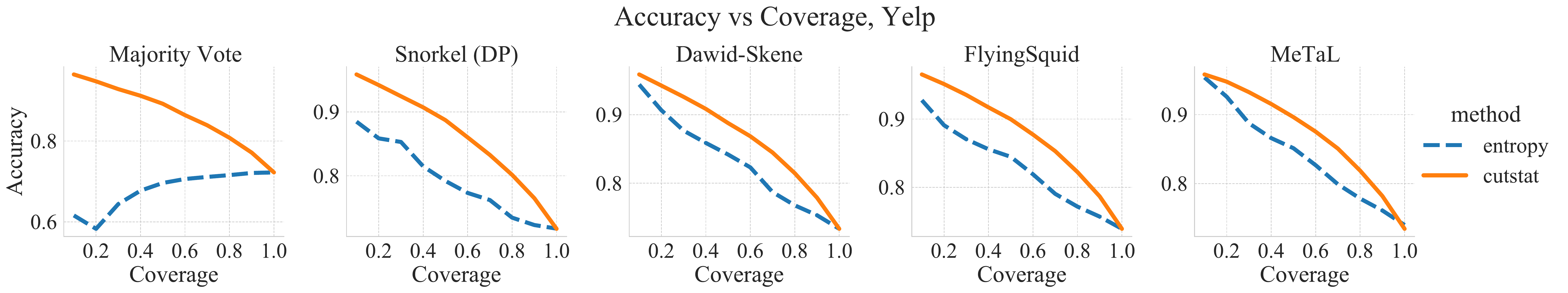}
\end{subfigure}\\
  \caption{Accuracy of the pseudolabeled training set versus the selection fraction
    $\beta$ for five different label models. A pretrained BERT model is used as $\phi$ for the cut
    statistic.
  The accuracy of the weak training labels is better for $\beta < 1$, indicating that sub-selection can select higher-quality
training sets.}
\label{fig:yelp-balaccs}
\vspace{-1.5em}
\end{figure}

For each $\beta \in \{0.1,0.2,\ldots,1.0\}$, Figure \ref{fig:yelp-balaccs} plots
the accuracy of the pseudolabels on the training subset $\cT'(\beta)$.
This shows how training subset quality varies with the selection fraction $\beta$.
We can compute this accuracy because most of the WRENCH benchmark datasets also come with ground-truth labels $Y$ (even on the training set) for evaluation.
Appendix \ref{sec:all-empir-results} contains the same plot for several other WRENCH datasets and figures showing the histograms of the entropy scores and the $Z_i$ values.

Figure \ref{fig:yelp-balaccs} shows that combining the cut statistic with a BERT representation selects better subsets than
the entropy score for all five label models tested, especially for majority vote, where the entropy scoring is badly miscalibrated.
For a well-calibrated score, the subset accuracy should decrease as $\beta$ increases.
These results suggest that the cut statistic is able to use the geometric information encoded in $\phi$ to select a more accurate subset of the weakly-labeled training data.
However, it does \emph{not} indicate whether that better subset actually leads to a more accurate end model.
Since we could also use $\phi$ for the end model---e.g., by fine-tuning the full neural network or training a linear model on top of $\phi$---it's possible that the training step \eqref{eqn:end-model-objective} will already perform the same corrections as the cut statistic, and the end model trained on the selected subset will perform no differently from the end model trained with $\beta=1.0$.
In the following section, we focus on the cut statistic and conduct large-scale
empirical evaluation on the WRENCH benchmark to measure whether subset selection improves end model performance.
Our empirical results suggest that subset selection and the end model training step are complementary: even when we use powerful representations for the end model, subset selection further improves performance, sometimes by a large margin.

\vspace{-3mm}

\section{Experiments}
\label{sec:experiments}

Having established that the cut statistic can effectively select
weakly-labeled training \emph{sub}sets that are higher-quality than
the original training set, we now turn to a wider empirical study to see whether
this approach actually improves the performance of end models in practice.

\textbf{Datasets and Models.}
We evaluate our approach on the WRENCH benchmark
\citep{zhang2021wrench} for weak supervision.
We compare the status-quo of full coverage ($\beta=1.0$) to
$\beta$ chosen from $\{0.1,0.2,\ldots,1.0\}$.
We evaluate our approach with five different label models: Majority Vote (\textbf{MV}), the
original Snorkel/Data Programming (\textbf{DP}), \citep{ratner2016data},
Dawid-Skene (\textbf{DS}) \citep{dawid1979maximum}, FlyingSquid
(\textbf{FS}) \citep{fu2020fast}, and \textbf{MeTaL} \citep{ratner2019training}.
Following \citet{zhang2021wrench}, we use  pretrained
\texttt{roberta-base} and \texttt{bert-base-cased}\footnote{We refer to pretrained models by their names on the HuggingFace Datasets Hub. All model weights were downloaded from the hub: https://huggingface.co/datasets} as the end model representation for text data, and hand-specified representations for tabular data. %
We performed all model training on NVIDIA A100 GPUs.
We primarily evaluate on seven textual datasets from the WRENCH benchmark: \textbf{IMDb} (sentiment analysis),
\textbf{Yelp} (sentiment analysis), \textbf{Youtube} (spam classification),
\textbf{TREC} (question classification), \textbf{SemEval} (relation extraction),
\textbf{ChemProt} (relation extraction), and \textbf{AGNews} (text classification).
Full details for the datasets and the weak label sources are available in
\cite{zhang2021wrench} Table 5 and reproduced here in Appendix
\ref{sec:dataset-details}.
We explore other several other datasets and data modalities in Sections \ref{sec:choice-repr-cut}-\ref{sec:other-data-modal}.

\textbf{Cut statistic.}
For the representation $\phi$, for text datasets we used the \texttt{[CLS]} token representation of a large pretrained model such as BERT or RoBERTa.
For relation extraction tasks, we followed \cite{zhang2021wrench} and used the concatenation of the \texttt{[CLS]} token and the average contextual representation of the tokens in each entity span.
In Section \ref{sec:other-data-modal}, for the tabular \textbf{Census} dataset we use the raw data features for $\phi$.
Unless otherwise specified, we used \emph{the same representation} for $\phi$ and for the initial end model.
For example, when training \texttt{bert-base-cased} as the end model, we used \texttt{bert-base-cased} as $\phi$ for the cut statistic.
We explore several alternatives to this choice in Section~\ref{sec:choice-repr-cut}.

\textbf{Hyperparameter tuning.}
\looseness=-1
Our subset selection approach introduces a new
hyperparameter, $\beta$---the fraction of covered data to retain for training
the classifier.
To keep the hyperparameter tuning burden low, we first tune all other
hyperparameters identically to \citet{zhang2021wrench} holding $\beta$ fixed at
1.0.
We then use the optimal hyperparameters (learning rate, batch size, weight
decay, etc.) from $\beta=1.0$ for a grid search over values of $\beta \in \{0.1,
0.2, \ldots, 1.0\}$, choosing the value with the best (ground-truth) validation performance.
Better results could be achieved by tuning all the hyperparameters together, but
this approach limits the number of possible combinations, and it matches the
setting where an existing, tuned weak supervision pipeline (with $\beta=1.0$) is
adapted to use subset selection.
In all of our experiments, we used $K=20$ nearest neighbors to compute the cut
statistic and performed no tuning on this value.
Appendix \ref{sec:all-empir-results} contains an ablation showing that performance is
not sensitive to this choice.

\subsection{WRENCH Benchmark Performance}
\label{sec:sub-select-wrench}
\begin{table*}[tb]
  \centering
  \caption{End model test accuracy (stddev) for weak supervision with $\beta=1$ versus weak supervision with $\beta$ selected from $\{0.1,0.2,0.3,\ldots, 1.0\}$ using a validation set (``+ cutstat''), shown for BERT (\emph{B}) and RoBERTa (\emph{RB}) end models.
    For these results, the cut statistic uses the same representation as the end model for $\phi$.
    The cut statistic broadly improves the performance of weak supervision for many (label model, dataset, end model) combinations.}
  \label{tbl:allresults-wrench}
  \resizebox{\textwidth}{!}{
    \begin{tabular}{ccccccccccc}
      \toprule
      & Label model & {\bf imdb} & {\bf yelp} & {\bf youtube} & {\bf trec} & {\bf semeval} & {\bf chemprot} & {\bf agnews}\\
      \midrule
      \multirow{ 11}{*}{B} & Majority Vote     & $\tblnum{78.32}{2.62}$ & $\tblnum{86.85}{1.42}$ & $\tblnum{95.12}{1.27}$ & $\tblnum{66.76}{1.46}$ & $\tblnum{85.17}{0.89}$ & $\tblnum{57.44}{2.01}$ & $\tblnum{86.59}{0.47}$\\
      & $\qquad$ + \emph{cutstat} & $\tblnum{\textbf{81.86}}{1.36}$ & $\tblnum{\textbf{89.49}}{0.78}$ & $\tblnum{\textbf{95.60}}{0.72}$ &$\tblnum{\textbf{71.84}}{3.00}$ &$\tblnum{\textbf{92.47}}{0.49}$ & $\tblnum{\textbf{57.47}}{1.00}$ &$\tblnum{86.26}{0.43}$\\
      \addlinespace[1.5mm]
      & Data Programming     &$\tblnum{75.90}{1.44}$ &$\tblnum{76.43}{1.29}$ &$\tblnum{92.48}{1.30}$ &$\tblnum{71.20}{1.78}$ &$\tblnum{71.97}{1.57}$ &$\tblnum{51.89}{1.60}$ &$\tblnum{86.01}{0.63}$\\
      & $\qquad$ + \emph{cutstat} & $\tblnum{\textbf{79.07}}{2.52}$ & $\tblnum{\textbf{88.13}}{1.46}$ & $\tblnum{\textbf{93.92}}{0.93}$ & $\tblnum{\textbf{76.76}}{1.92}$ & $\tblnum{\textbf{91.07}}{0.90}$ & $\tblnum{\textbf{55.10}}{1.49}$ &$\tblnum{85.89}{0.45}$\\
      \addlinespace[1.5mm]
      & Dawid-Skene     &$\tblnum{78.86}{1.34}$ &$\tblnum{88.45}{1.42}$ &$\tblnum{88.45}{1.42}$ &$\tblnum{51.04}{1.71}$ &$\tblnum{72.40}{1.53}$ &$\tblnum{44.08}{1.37}$ &$\tblnum{86.26}{0.56}$\\
      & $\qquad$ + \emph{cutstat}& $\tblnum{\textbf{80.22}}{1.69}$ & $\tblnum{\textbf{89.04}}{1.10}$ & $\tblnum{\textbf{90.72}}{1.27}$ & $\tblnum{\textbf{57.28}}{2.91}$ & $\tblnum{\textbf{89.07}}{1.62}$ & $\tblnum{\textbf{49.07}}{1.48}$ & $\tblnum{\textbf{86.93}}{0.22}$\\
      \addlinespace[1.5mm]
      & FlyingSquid     &$\tblnum{77.46}{1.88}$ &$\tblnum{84.98}{1.44}$ &$\tblnum{91.52}{2.90}$ &$\tblnum{31.12}{2.39}$ &$\tblnum{31.83}{0.00}$ &$\tblnum{46.72}{0.96}$ &$\tblnum{86.10}{0.80}$\\
      & $\qquad$ + \emph{cutstat} & $\tblnum{\textbf{80.85}}{1.50}$ & $\tblnum{\textbf{88.75}}{1.13}$ &$\tblnum{91.04}{1.23}$ & $\tblnum{\textbf{33.84}}{3.17}$ &$\tblnum{31.83}{0.00}$ & $\tblnum{\textbf{48.65}}{0.99}$ &$\tblnum{85.90}{0.39}$\\
      \addlinespace[1.5mm]
      & MeTaL  &$\tblnum{78.97}{2.57}$ &$\tblnum{83.05}{1.69}$ &$\tblnum{93.36}{1.15}$ &$\tblnum{58.88}{1.22}$ &$\tblnum{58.17}{1.77}$ &$\tblnum{55.61}{1.35}$ &$\tblnum{86.06}{0.82}$\\
      & $\qquad$ + \emph{cutstat}& $\tblnum{\textbf{81.49}}{1.51}$ & $\tblnum{\textbf{88.41}}{1.19}$ &$\tblnum{92.64}{0.41}$ & $\tblnum{\textbf{63.80}}{2.28}$ & $\tblnum{\textbf{65.23}}{0.91}$ & $\tblnum{\textbf{58.33}}{0.81}$ & $\tblnum{\textbf{86.16}}{0.48}$\\
      \midrule
      \multirow{12}{*}{RB} & Majority Vote &$\tblnum{86.99}{0.55}$ &$\tblnum{88.51}{3.25}$ &$\tblnum{95.84}{1.18}$ &$\tblnum{67.60}{2.38}$ &$\tblnum{85.83}{1.22}$ &$\tblnum{57.06}{1.12}$ &$\tblnum{87.46}{0.53}$\\
      &  $\qquad$ + \emph{cutstat} &$\tblnum{86.69}{0.75}$ & $\tblnum{\textbf{95.19}}{0.23}$ & $\tblnum{\textbf{96.00}}{1.10}$ & $\tblnum{\textbf{72.92}}{1.31}$ & $\tblnum{\textbf{92.07}}{0.80}$ & $\tblnum{\textbf{59.05}}{0.56}$ & $\tblnum{\textbf{88.01}}{0.47}$\\
      \addlinespace[1.5mm]
      & Data Programming &$\tblnum{86.31}{1.53}$ &$\tblnum{88.73}{5.07}$ &$\tblnum{94.08}{1.48}$ &$\tblnum{71.40}{3.30}$ &$\tblnum{71.07}{1.66}$ &$\tblnum{52.52}{0.69}$ &$\tblnum{86.75}{0.24}$\\
      &  $\qquad$ + \emph{cutstat} & $\tblnum{\textbf{86.46}}{1.82}$ & $\tblnum{\textbf{93.95}}{0.93}$ &$\tblnum{93.04}{1.30}$ & $\tblnum{\textbf{76.84}}{4.09}$ & $\tblnum{\textbf{86.07}}{1.82}$ & $\tblnum{\textbf{56.43}}{1.37}$ & $\tblnum{\textbf{87.76}}{0.17}$\\
      \addlinespace[1.5mm]
      & Dawid-Skene &$\tblnum{85.50}{1.68}$ &$\tblnum{92.42}{1.41}$ &$\tblnum{92.48}{1.44}$ &$\tblnum{51.24}{3.50}$ &$\tblnum{70.83}{0.75}$ &$\tblnum{45.61}{2.60}$ &$\tblnum{87.29}{0.40}$\\
      &  $\qquad$ + \emph{cutstat} & $\tblnum{\textbf{86.14}}{0.60}$ & $\tblnum{\textbf{93.81}}{0.69}$ & $\tblnum{\textbf{93.84}}{0.70}$ & $\tblnum{\textbf{58.48}}{2.75}$ & $\tblnum{\textbf{81.67}}{1.33}$ & $\tblnum{\textbf{52.93}}{1.67}$ & $\tblnum{\textbf{88.35}}{0.22}$\\
      \addlinespace[1.5mm]
      & FlyingSquid &$\tblnum{85.25}{1.96}$ &$\tblnum{92.14}{2.76}$ &$\tblnum{93.52}{2.11}$ &$\tblnum{35.40}{1.32}$ &$\tblnum{31.83}{0.00}$ & $\tblnum{47.23}{1.04}$ &$\tblnum{86.56}{0.55}$\\
      &  $\qquad$ + \emph{cutstat} & $\tblnum{\textbf{87.71}}{0.76}$ & $\tblnum{\textbf{94.50}}{0.74}$ & $\tblnum{\textbf{95.84}}{0.54}$ & $\tblnum{\textbf{38.16}}{0.43}$ &$\tblnum{31.83}{0.00}$ & $\tblnum{\textbf{50.55}}{1.05}$ & $\tblnum{\textbf{87.49}}{0.13}$\\
      \addlinespace[1.5mm]
      & MeTaL &$\tblnum{86.16}{1.13}$ &$\tblnum{88.41}{3.25}$ &$\tblnum{92.40}{1.19}$ &$\tblnum{55.44}{1.08}$ &$\tblnum{59.53}{1.87}$ &$\tblnum{56.74}{0.58}$ &$\tblnum{86.74}{0.60}$\\
      &  $\qquad$ + \emph{cutstat} & $\tblnum{\textbf{87.46}}{0.65}$ & $\tblnum{\textbf{94.03}}{0.53}$ & $\tblnum{\textbf{93.84}}{1.38}$ & $\tblnum{\textbf{69.72}}{2.39}$ & $\tblnum{\textbf{66.70}}{0.90}$ & $\tblnum{\textbf{57.40}}{0.98}$ & $\tblnum{\textbf{88.40}}{0.38}$\\
      \bottomrule
    \end{tabular}}
  \vspace{-4mm}
\end{table*}

Table \ref{tbl:allresults-wrench} compares the test
performance of full coverage ($\beta=1.0$) to the performance of the cut
statistic with $\beta$ chosen according to validation performance.
Standard deviations across five random initializations are shown in parentheses.

\begin{wrapfigure}[12]{r}{0.38\textwidth}
  \centering
  \includegraphics[width=0.35\textwidth]{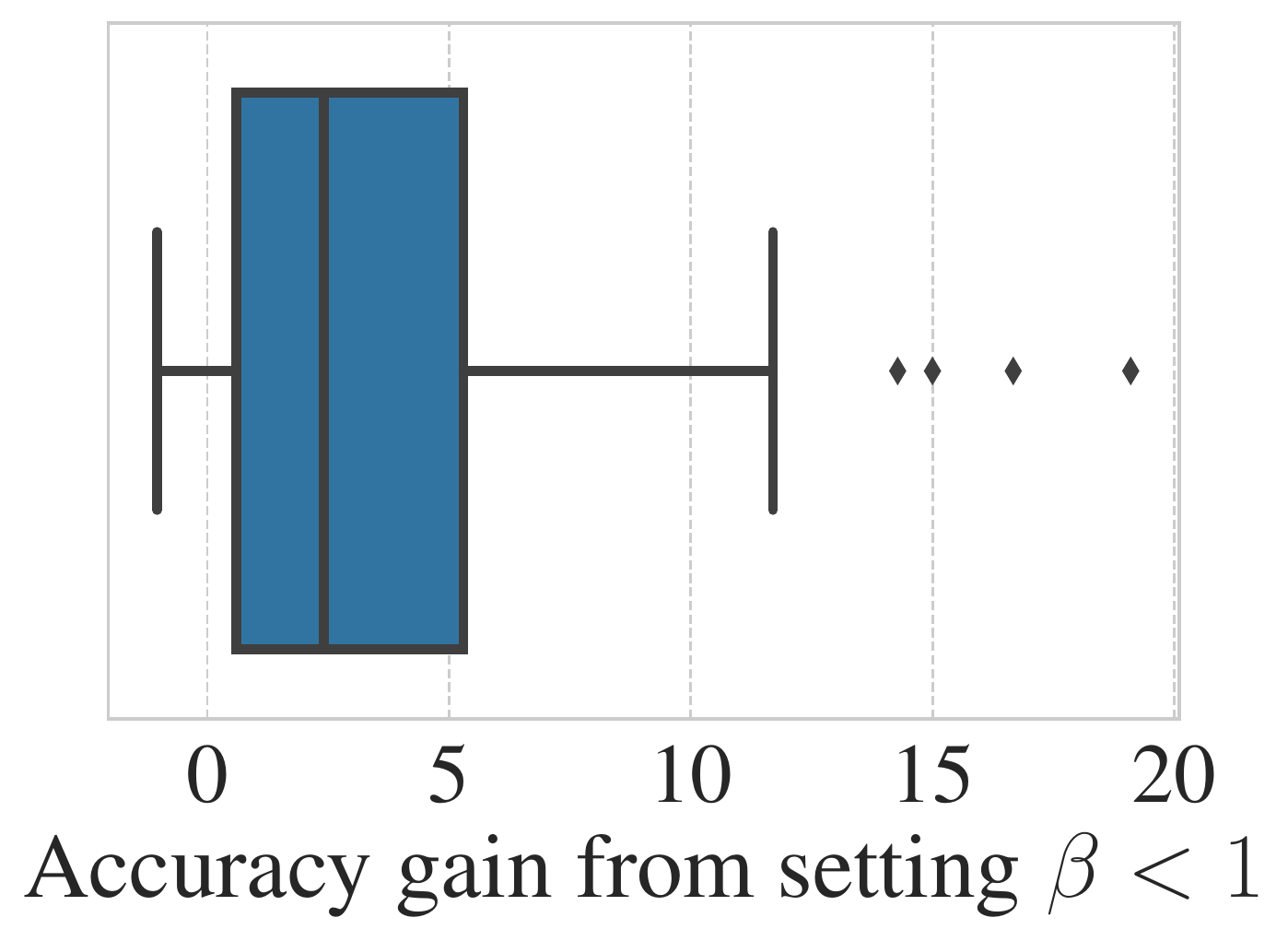}
  \caption{Test accuracy gain from setting $\beta < 1$ across all WRENCH trials.}
  \label{fig:accuracy-boxplot}
\end{wrapfigure}
The cut statistic improves the mean performance (across runs) compared to $\beta=1.0$ in
61/70 cases, sometimes by 10--20 accuracy points (e.g., BERT SemEval DP).
Since $\beta=1.0$ is included in the hyperparameter search over $\beta$, the
only cases where the cut statistic performs worse than $\beta=1.0$ are due to
differences in the performance on the validation and test sets.
The mean accuracy gain from setting $\beta<1.0$ across all 70 trials is
3.65 points, indicating that the cut statistic is complementary to the end model training.
If no validation data is available to select $\beta$, we found that $\beta=0.6$ had the best median performance gain over all label model, dataset, and end model combinations: $+1.7$ accuracy points compared to $\beta=1.0$.
However, we show in Section \ref{sec:smallval} that very small validation sets are good enough to select $\beta$.
The end model trains using $\phi$, but using $\phi$ to \emph{first} select a good training set further improves performance.
Figure \ref{fig:accuracy-boxplot} displays a box plot of the accuracy gain from
using sub-selection.
Appendix \ref{sec:end-model-perf} contains plots of the end model performance
versus the coverage fraction $\beta$.
In some cases, the cut statistic is competitive with COSINE \cite{yu2021fine} , which does multiple rounds of self-training on the unlabeled data.
Table \ref{tbl:apdx-cosine-results} compares the two methods, and we show in Appendix \ref{apdx:cutstat-self-train} how to combine them to improve performance.

For Table \ref{tbl:allresults-wrench}, we used the same representation for $\phi$ and for the initial end model.
These results indicate that representations from very large models such as GPT-3 or CLIP are not needed to improve end model performance.
However, there is no \emph{a priori} reason to use the same representation for $\phi$
and for the end model initialization.
Using a much larger model for $\phi$ may improve the cut statistic performance without drastically slowing down training, since we only need to perform \emph{inference} on the larger model.
We examine the role of the representation choice more thoroughly in Section
\ref{sec:choice-repr-cut}.

\vspace{-2mm}
\subsection{Choice of Representation for Cut Statistic}
\label{sec:choice-repr-cut}
\looseness=-1
How important is the quality of $\phi$ (the representation used for the cut statistic) for the performance of subset selection?
In this section we experiment with different choices of $\phi$.
To isolate the effect of $\phi$ on performance, we use the generic BERT as the end model.
Performance can improve further when using more powerful representations for the end model as well, but we use a fixed end model here to explore how the choice of $\phi$ affects final performance.
Our results indicate that (i) for weak supervision tasks in a specific domain (e.g., biomedical text), models pretrained on that domain perform better than general-domain models as $\phi$ and (ii) for a specific task (e.g., sentiment analysis), models pretrained
on that task, but on different data, perform very well as $\phi$.
We also show in Appendix \ref{sec:all-empir-results} that using a larger generic model for $\phi$ can improve performance when the end model is held fixed.

\begin{figure}[t]
  \centering
  \includegraphics[width=\textwidth]{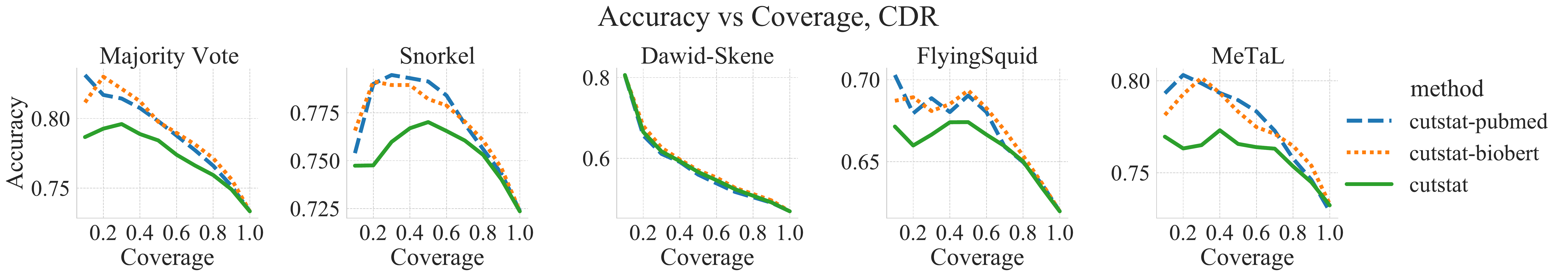}
  \caption{Domain-specific pretraining versus general-domain pretraining for
    $\phi$. Stock BERT (\emph{cutstat}) compared to BioBERT (\emph{cutstat-biobert}), and PubMedBERT (\emph{cutstat-pubmed}), two models pretrained on biomedical text.
  The domain-specific models select more accurate subsets than the generic model.}
  \label{fig:domain-specific}
  \vspace{-0.5em}
\end{figure}

\textbf{Domain-specific pretraining can help.}
The \textbf{CDR} dataset in WRENCH has biomedical abstracts
as input data.
Instead of using general-domain $\phi$ such as BERT and RoBERTa, does using a
domain-specific version improve performance?
Figure \ref{fig:domain-specific} shows that domain specific models \emph{do} improve over general-domain models when used in the cut statistic.
We compare \texttt{bert-base-cased} to \texttt{PubMedBERT-base-uncased-abstract-fulltext}
\citep{gu2021domain} and \texttt{biobert-base-cased-v1.2} \citep{lee2020biobert}.
The latter two models were pretrained on biomedical text.
The domain-specific models lead to higher quality training datasets for all label models except Dawid-Skene.
These gains in training dataset accuracy translate to gains in end-model
performance.
Trained with $\hat{Y}$ from majority vote and using BioBERT as $\phi$, a
general-domain BERT end model obtains test F1 score 61.14 (0.64), compared to 59.63
(0.84) using BERT for both $\phi$ and the end model.
Both methods improve over 58.20 (0.55) obtained from training generic BERT with $\beta=1.0$ (no sub-selection).

\begin{figure}[t]
  \centering
  \includegraphics[width=\textwidth]{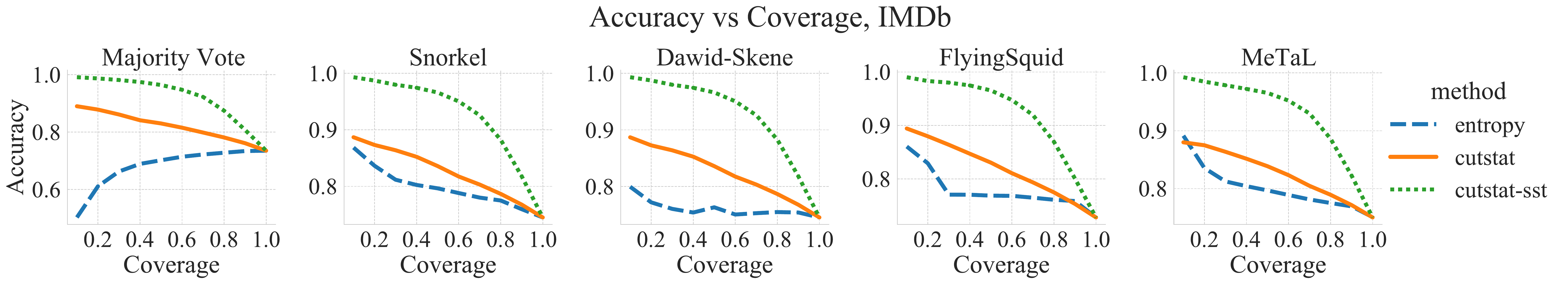}
  \caption{Comparison of IMDb training subset accuracy for the cut statistic with generic BERT (\emph{cutstat}) and a BERT fine-tuned for sentiment analysis on the SST-2 dataset (\emph{cutstat-sst}).
  The fine-tuned representation gives very high-quality training subsets when used with the cut statistic. }
\label{fig:imdb-diffrep}
\vspace{-1em}
\end{figure}

\textbf{Representations can transfer.}
If a model is trained for a particular task (e..g, sentiment analysis) on one dataset, can we use it as $\phi$ to perform weakly-supervised learning on a different dataset?
We compare two choices for $\phi$ on IMDb: regular \texttt{bert-base-cased}, and
\texttt{bert-base-cased} fine-tuned with fully-supervised learning on the Stanford
Sentiment Treebank (SST) dataset \citep{socher2013recursive}.
As indicated in Figure \ref{fig:imdb-diffrep}, the fine-tuned BERT representation
selects a far higher-quality subset for training.
This translates to better end model performance as well.
Using majority vote with the fine-tuned BERT as $\phi$ leads to test performance of 87.22 (0.57), compared to 81.86 in Table~\ref{tbl:allresults-wrench}.
These results suggest that if we have a representation $\phi$ that's already
useful for a task, we can effectively combine it with the cut statistic to
improve performance on a different dataset.

\begin{table*}[t!]
\centering
\caption{Test F1 of a weakly-supervised linear model (\emph{LR}) on the \textbf{Census} dataset, which consists of 13 hand-created features.
  Even though the representation does not come from a large, pretrained neural network, the cut statistic improves the performance of weak supervision for every label model.
Test F1 of a 1-hidden-layer network (\emph{MLP}) on CLIP representations of the \textbf{Basketball} dataset, where the results are noisy, but the cut statistic improves the mean performance of every label model.}
\label{tbl:results-census}
\resizebox{\textwidth}{!}{\begin{tabular}{ccccccccc}
            \toprule
& Dataset & {\bf Majority Vote} & {\bf Data Programming} & {\bf Dawid-Skene} & {\bf FlyingSquid} & {\bf MeTaL}\\
\midrule
          \multirow{2}{*}{LR} & Census & $\tblnum{50.71}{2.18}$ & $\tblnum{21.67}{17.32}$ & $\tblnum{49.90}{0.61}$ & $\tblnum{38.23}{3.96}$ & $\tblnum{51.41}{1.45}$\\
& $\qquad$ + \emph{cutstat} & $\tblnum{\textbf{57.98}}{0.68}$ & $\tblnum{\textbf{28.04}}{17.38}$ & $\tblnum{\textbf{58.49}}{0.23}$ & $\tblnum{\textbf{40.53}}{2.26}$ & $\tblnum{\textbf{54.99}}{1.54}$\\
                            \addlinespace[1.5mm]
          \multirow{2}{*}{MLP} & Basketball & $\tblnum{52.29}{6.62}$ & $\tblnum{52.14}{\;4.80}$ & $\tblnum{22.59}{\; 9.83}$ & $\tblnum{54.04}{12.15}$ & $\tblnum{32.99}{12.98}$\\
& $\qquad$ + \emph{cutstat} & $\tblnum{\textbf{55.82}}{3.98}$ & $\tblnum{\textbf{54.59}}{14.80}$ & $\tblnum{\textbf{43.77}}{12.47}$ & $\tblnum{\textbf{56.60}}{6.13}$\;\; & $\tblnum{\textbf{47.97}}{\; 8.89}$\\
                            \bottomrule
                          \end{tabular}}
                        \end{table*}
\subsection{Other Data Modalities}
\label{sec:other-data-modal}
Our experiments so far have used text data, where large pretrained
models like BERT and RoBERTa are natural choices for $\phi$.
Here we briefly study the cut statistic on \emph{tabular} and \emph{image} data.
The \textbf{Census} dataset in WRENCH consists of tabular data where the goal is
to classify whether a person's income is greater than \$50k from a set of 13 features.
We also use these hand-crafted features for $\phi$ and train a linear model on top of the features for the end model.
The \textbf{Basketball} dataset is a set of still images obtained from videos, and the goal is to classify whether basketball is being played in the image using the output of an off-the-shelf object detector in the $\Lambda_k$'s.
We used CLIP \cite{radford2021learning} representations of the video frames and trained a 1-hidden-layer neural network using the hyperparameter tuning space from \cite{zhang2021wrench}.
Table \ref{tbl:results-census} shows the results for these datasets.
The cut statistic improves the end model performance for every label model even with the small, hand-crafted representation, and also improves for the \textbf{Basketball} data.

\subsection{Using a Smaller Validation Set}
\label{sec:smallval}
Many datasets used to evaluate weak supervision methods actually come with large labeled
validation sets.
For example, the average validation set size of the WRENCH datasets from Table
\ref{tbl:allresults-wrench} is over 2,500 examples.
However, assuming access to a large amount of labeled validation data partially
defeats the purpose of weak supervision.
In this section, we show that the coverage parameter $\beta$ for the cut statistic can be selected using a much smaller validation set without compromising the performance gain over $\beta=1.0$.
We compare choosing the best model checkpoint and picking the best coverage fraction $\beta$ using (i) the \emph{full} validation set and (ii) a randomly-sampled validation set of 100 examples.
Table \ref{tbl:results-smallval} shows the results for the majority vote label model. The full validation numbers come from Table \ref{tbl:allresults-wrench}.
The difference between selecting data with the validation-optimal $\beta$ and using $\beta=1.0$ is broadly similar between the full validation and small validation cases.
This suggests that most of the drop in performance from full validation to small validation is due to the noisier choice of the best model checkpoint, not due to choosing a suboptimal $\beta$.

\begin{table*}[t!]
  \centering
  \caption{Comparison between using the full validation set to choose $\beta$ and the model checkpoint versus using a randomly selected validation subset of 100 examples.
    These results use the majority vote (MV) label model.
    Standard deviation is reported over five random seeds used to select the validation set (not to be confused with Table \ref{tbl:allresults-wrench}, where standard deviation is reported over random seeds controlling the deep model initialization).
    Most of the drop in performance is due to the noisier checkpoint selection when using the small validation set.
    I.e., the difference between $\beta = $ best and $\beta=1.0$ is similar for the full validation and random validation cases.}
  \label{tbl:results-smallval}
  \resizebox{\textwidth}{!}{
    \begin{tabular}{lccccccccc}
      \toprule
      & Val. size & $\beta$ & {\bf imdb} & {\bf yelp} & {\bf youtube} & {\bf trec} & {\bf semeval} & {\bf chemprot} & {\bf agnews}\\
      \midrule
      \multirow{4.5}{*}{BERT} & full & $1.0$ & $\text{78.32}_{\phantom{\text{0.00}}}$ & $\text{86.85}_{\phantom{\text{0.00}}}$ & $\text{95.12}_{\phantom{\text{0.00}}}$ & $\text{66.76}_{\phantom{\text{0.00}}}$ & $\text{85.17}_{\phantom{\text{0.00}}}$ & $\text{57.44}_{\phantom{\text{0.00}}}$ & $\text{86.59}_{\phantom{\text{0.00}}}$ \\
      & full & best & $\text{81.86}_{\phantom{\text{0.00}}}$ & $\text{89.49}_{\phantom{\text{0.00}}}$ & $\text{95.60}_{\phantom{\text{0.00}}}$ &$\text{71.84}_{\phantom{\text{0.00}}}$ &$\text{92.47}_{\phantom{\text{0.00}}}$ & $\text{57.47}_{\phantom{\text{0.00}}}$ & $\text{86.26}_{\phantom{\text{0.00}}}$\\
      \addlinespace[1mm]
      & 100 & 1.0 & $\text{79.17}_{\text{2.80}}$ & $\text{84.88}_{\text{1.97}}$ & $\text{94.50}_{\text{0.32}}$	& $\text{65.33}_{\text{1.84}}$	& $\text{85.62}_{\text{0.64}}$	& $\text{54.99}_{\text{1.63}}$ &	$\text{84.77}_{\text{1.11}}$\\
      & 100 & best & $\text{79.75}_{\text{2.18}}$ &	$\text{87.96}_{\text{1.00}}$ &	$\text{94.40}_{\text{0.63}}$ &	$\text{74.50}_{\text{1.83}}$ &	$\text{92.40}_{\text{1.81}}$ &	$\text{54.40}_{\text{2.35}}$ &	$\text{84.96}_{\text{1.20}}$\\
      \midrule
      \multirow{4.5}{*}{RoBERTa}  & full & $1.0$ & $\tblnum{86.99}{\phantom{0.00}}$ & $\tblnum{88.51}{\phantom{0.00}}$ & $\tblnum{95.84}{\phantom{0.00}}$ & $\tblnum{67.60}{\phantom{0.00}}$ & $\tblnum{85.83}{\phantom{0.00}}$ & $\tblnum{57.06}{\phantom{0.00}}$ & $\tblnum{87.46}{\phantom{0.00}}$\\
      & full & best & $\tblnum{86.69}{\phantom{0.00}}$ & $\tblnum{95.19}{\phantom{0.00}}$ & $\tblnum{96.00}{\phantom{0.00}}$ & $\tblnum{72.92}{\phantom{0.00}}$ & $\tblnum{92.07}{\phantom{0.00}}$ & $\tblnum{59.05}{\phantom{0.00}}$ & $\tblnum{88.01}{\phantom{0.00}}$\\
      \addlinespace[1mm]
      & 100 & 1.0 & $\text{85.74}_{\text{1.11}}$ &	$\text{89.32}_{\text{1.49}}$ &	$\text{95.24}_{\text{1.11}}$ &	$\text{66.40}_{\text{1.56}}$ &	$\text{84.38}_{\text{1.18}}$ &	$\text{56.71}_{\text{0.55}}$ &	$\text{85.79}_{\text{0.75}}$\\
      & 100 & best & $\text{85.24}_{\text{0.78}}$ &	$\text{93.82}_{\text{0.48}}$ &	$\text{96.06}_{\text{0.70}}$ &	$\text{75.15}_{\text{2.89}}$ &	$\text{91.24}_{\text{0.76}}$ &	$\text{56.52}_{\text{0.44}}$ &	$\text{87.20}_{\text{0.31}}$\\
      \bottomrule
    \end{tabular}}
\end{table*}

\subsection{Discussion}
\label{sec:discussion}
\textbf{Why not \emph{correct} pseudolabels with nearest neighbor?}
Consider an example $x_i$ whose weak label $\hat{Y}(x_i)$ disagrees with the
weak label $\hat{Y}(x_j)$ of most neighbors $j\in N(i)$.
This example would get thrown out by the cut statistic selection.
Instead of throwing such data points out, we could try to
\emph{re-label} them with the majority weak label from the neighbors.
However, throwing data out is a more conservative (and hence possibly more robust)
approach. %
For example, if the weak labels are mostly \emph{wrong} on hard examples close to the true unknown decision boundary, relabeling makes the training set worse, whereas the cut statistic ignores these points.
Appendix \ref{sec:apdx-relabeling} contains an empirical comparison between subset selection and relabeling.
For the representations studied in this work, relabeling largely fails to improve training set quality and end model performance.

\textbf{Why does subset selection work?}
\looseness=-1
As suggested above,
subset selection can change the distribution of data points in the training set shown to the end model.
For example, it may only select ``easy'' examples.
However, this is already a problem in today's weak supervision methods:
the full weakly-labeled training set $\cT$ is already biased.
For example, many labeling functions are keyword-based, such as those in Section
\ref{sec:background} (``good''$\to$positive sentiment, ``bad''$\to$negative).
In these examples, $\cT$ itself is a biased subset of the input distribution (only
sentences that contain ``good'' or ``bad'', versus all sentences).
Theoretical understanding for why weak supervision methods perform well
on the uncovered set $\cX \setminus \{x : \mathbf{\Lambda}(x) \ne \abst\}$ is
currently lacking, and existing generalization bounds for the end model
do not capture this phenomenon.
In the following section we present a special (but practically-motivated) case
where this bias can be avoided.
In this case, we prove a closed form for the coverage-precision tradeoff of selection methods, giving subset selection some theoretical motivation.

\section{Theoretical Results: Why Does Subset Selection Work?}
\label{sec:tradeoff-theory}
We begin by presenting a theoretical setup motivated by the CheXPert \cite{irvin2019chexpert} and MIMIC-CXR \cite{johnson2019mimic} datasets, where
the weak labels are derived from radiology notes and the goal is to learn an
end model for classifying X-ray images.
Suppose for this section that we have two (possibly related) \emph{views} $\psi_0(X)$, $\psi_1(X)$ of the data $X$, i.e., $\psi_0: \cX \to \Psi_0$, $\psi_1: \cX \to \Psi_1$.
We use $\psi$ here to distinguish from $\phi$, the representation used to compute nearest neighbors for the cut statistic.
For example, if the input space $\cX$ is multi-modal, and each $x_i =
(x_i^{(0)}, x_i^{(1)})$, then we can set $\psi_0$ and $\psi_1$ to project onto the individual modes (e.g., $\phi_0(X)$ the clinical note and $\phi_1(X)$ the X-ray).
We will assume that the labeling functions $\Lambda_k(x_i)$ only depend on
$\psi_0(x_i)$, and that the end model $f$ only depends on $\psi_1(x_i)$. In the multi-modal example, this means the labeling functions are defined on one view, and the end model is trained on the other view.
To prove a closed form for the precision/coverage tradeoff, we make the
following strong assumption relating the two views $\psi_0$ and $\psi_1$:
\begin{assumption}[Conditional independence]
\label{asmp:condindep}
The random variables $\psi_0(X)$, $\psi_1(X)$ are conditionally independent given the true
(unobserved) label $Y$.
That is, for any sets $A \subset \Psi_0$, $B\subset \Psi_1$,
\[
\P_{X,Y}[\psi_0(X) \in A, \psi_1(X) \in B | Y] = \P_{X,Y}[\psi_0(X)\in A| Y]\P_{X,Y}[\psi_1(X)\in B| Y].
\]
\end{assumption}
Note since every $\Lambda_k$ only depends on $\psi_0(X)$, the pseudolabel $\hat{Y}$ only depends on $\psi_0(X)$.
Hence $\hat{Y}(X) = \hat{Y}(\psi_0(X))$, and likewise for an end model $f$, $f(X) = f(\psi_1(X))$.
Assumption \ref{asmp:condindep} implies:
\[
\P_{X,Y}[\mathbb{I}[\hat{Y}(X) \ne Y] , \psi_1(X) \in B| Y] = \P_{X,Y}[\mathbb{I}[\hat{Y}(X) \ne Y] | Y] \P_{X,Y}[\psi_1(X) \in B | Y]
\]
\looseness=-1
for every $B \subset \Psi_1$.
In this special case, the end model training reduces to learning
with class-conditional noise (CCN), since the errors $\mathbb{I}[\hat{Y}(X) \ne Y]$ are conditionally independent of the representation $\psi_1(X)$ being used for the end model.
This assumption is most natural for the case of multi-modal data and $\psi_0$, $\psi_1$ that project onto each mode, but it may also roughly apply when the representation being used for the end model (such as a BERT representation) is ``suitably orthogonal'' to the input $X$.
While very restrictive, this assumption allows us to make the coverage-precision tradeoff precise.

\begin{theorem}
\label{thm:main-theorem}
Suppose Assumption \ref{asmp:condindep} holds, and that $\cY = \{0,1\}$.
Define the \emph{balanced error} of a classifier $f$ on labels $Y$ as:
$
err_{bal}(f, Y) = \frac{1}{2}(\P[f(X)=0|Y=1] + \P[f(X) = 1 | Y=0]).
$
We write $f(X)$ instead of $f(\psi_1(X))$ for convenience.
Let $\hat{Y} : \Psi_0(X) \to \{0,1\}$ be an arbitrary label model.
Define $\alpha = \P[Y=0 | \hat{Y} = 1]$ and $\gamma = \P[Y=1 | \hat{Y} =
0]$ and suppose $\alpha + \gamma < 1$, $\P[\hat{Y} = y] > 0$ for $y\in \{0,1\}$.
These parameters measure the amount of noise in $\hat{Y}$.
Define $f^* := \inf_{f\in \cF} err_{bal}(f,Y)$. %
Let $\hat{f}$ be the classifier obtained by minimizing the empirical balanced accuracy
on $\{(x_i, \hat{Y}(x_i))\}_{i=1}^n$.
Then the following holds w.p. $1-\delta$ over the
sampling of the data:
\begin{align*}
\err_{bal}(\hat{f},Y) - \err_{bal}(f^*, Y) \le \widetilde{\mathcal{O}}\left(\frac{1}{1-\alpha-\gamma}\sqrt{\frac{\VC(\cF) + \log\frac{1}{\delta}}{n\P[\hat{Y}\ne\abst]\min_{y}\P[\hat{Y} = y | \hat{Y} \ne \abst]}}\right),
\end{align*}

where $\tilde{\cO}$ hides log factors in $m$ and $\VC(\cF)$.
\end{theorem}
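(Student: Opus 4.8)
The plan is to reduce the theorem to a standard agnostic generalization bound for learning with class-conditional noise (CCN). Assumption~\ref{asmp:condindep} makes the population balanced error under the pseudolabels $\hat Y$ an \emph{increasing affine} function of the population balanced error under the true labels $Y$, with slope exactly $1-\alpha-\gamma$; empirical risk minimization against $\hat Y$ is then, in the population limit, the same optimization as against $Y$, so a VC-type uniform-convergence bound on the noisy objective transfers to an excess-risk bound on the clean objective up to the factor $1/(1-\alpha-\gamma)$.

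First I would establish the key identity. Writing $\err_{bal}(f,\hat Y) := \tfrac12\bigl(\P[f(X)=0\mid\hat Y=1] + \P[f(X)=1\mid\hat Y=0]\bigr)$, I claim that for every $f\in\cF$,
\[
\err_{bal}(f,\hat Y) \;=\; \tfrac{1}{2}(\alpha+\gamma) \;+\; (1-\alpha-\gamma)\,\err_{bal}(f,Y).
\]
To see this, expand $\P[f(X)=0\mid\hat Y=1] = \sum_{y\in\{0,1\}}\P[f(X)=0\mid \hat Y=1, Y=y]\,\P[Y=y\mid\hat Y=1]$; since $f(X)=f(\psi_1(X))$ is conditionally independent of $\hat Y=\hat Y(\psi_0(X))$ given $Y$, the conditioning on $\hat Y$ inside each term can be dropped, and likewise for $\P[f(X)=1\mid\hat Y=0]$. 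Collecting the coefficients of $\P[f(X)=1\mid Y=0]$ and $\P[f(X)=0\mid Y=1]$ (in which $\alpha=\P[Y=0\mid\hat Y=1]$ and $\gamma=\P[Y=1\mid\hat Y=0]$ appear) yields the displayed formula. Because $\alpha+\gamma<1$, the map $t\mapsto\tfrac12(\alpha+\gamma)+(1-\alpha-\gamma)t$ is strictly increasing, so $\inf_{f\in\cF}\err_{bal}(f,\hat Y)$ and $\inf_{f\in\cF}\err_{bal}(f,Y)$ are approached by the same sequences of classifiers, and
\[
\err_{bal}(\hat f,Y)-\inf_{f\in\cF}\err_{bal}(f,Y) \;=\; \frac{1}{1-\alpha-\gamma}\Bigl(\err_{bal}(\hat f,\hat Y)-\inf_{f\in\cF}\err_{bal}(f,\hat Y)\Bigr).
\]

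Next I would bound the right-hand excess risk by uniform convergence of the empirical noisy balanced error minimized by $\hat f$, namely $\widehat{\err}(f)=\tfrac12\bigl(\widehat{\P}_{n_1}[f(X)=0\mid\hat Y=1]+\widehat{\P}_{n_0}[f(X)=1\mid\hat Y=0]\bigr)$, where $n_y=|\{i:\hat Y(x_i)=y\}|$ and each conditional empirical probability is an average over the $n_y$ examples with $\hat Y(x_i)=y$, which are i.i.d.\ draws from $\P[\,\cdot\mid\hat Y=y]$ (valid since $\hat Y(x_i)$ is a deterministic function of $x_i$). Applying a standard agnostic VC bound to the class $\{x\mapsto\ind[f(x)=c]:f\in\cF\}$, whose VC dimension is $O(\VC(\cF))$, within each of the two subsamples and taking a union bound over $y\in\{0,1\}$, with probability $\ge 1-\delta$ we get $\sup_{f\in\cF}|\widehat{\err}(f)-\err_{bal}(f,\hat Y)|\le\widetilde{\cO}\bigl(\sqrt{(\VC(\cF)+\log(1/\delta))/\min(n_0,n_1)}\bigr)$, and the usual comparison of $\hat f$ to a near-optimal $f$ gives $\err_{bal}(\hat f,\hat Y)-\inf_{f\in\cF}\err_{bal}(f,\hat Y)\le 2\sup_{f\in\cF}|\widehat{\err}(f)-\err_{bal}(f,\hat Y)|$. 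Finally I would replace the random $\min(n_0,n_1)$ by a deterministic bound: each $n_y$ is a sum of $n$ independent $\mathrm{Bernoulli}(\P[\hat Y=y])$ variables, so a multiplicative Chernoff bound gives $n_y\ge\tfrac12 n\,\P[\hat Y=y]=\tfrac12 n\,\P[\hat Y\ne\abst]\,\P[\hat Y=y\mid\hat Y\ne\abst]$ with probability $\ge 1-\delta$ once $n\,\P[\hat Y=y]$ exceeds a constant multiple of $\log(1/\delta)$ (the claimed bound being vacuous otherwise). Substituting $\min(n_0,n_1)\ge\tfrac12 n\,\P[\hat Y\ne\abst]\min_y\P[\hat Y=y\mid\hat Y\ne\abst]$ and dividing by $1-\alpha-\gamma$ yields the stated rate after folding constants, logarithmic factors, and union-bound terms into $\widetilde{\cO}$. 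I expect the main work to be in this last combination --- running the uniform-convergence argument simultaneously on the two class-conditional subsamples and controlling their random, possibly imbalanced sizes so that the effective sample size is correctly the size of the rarer covered class; the affine reduction, though it is the conceptual crux, is a short computation once Assumption~\ref{asmp:condindep} is in hand.
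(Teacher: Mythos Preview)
Your proposal is correct and follows essentially the same approach as the paper: establish the affine relation $\err_{bal}(f,\hat Y)=\tfrac12(\alpha+\gamma)+(1-\alpha-\gamma)\,\err_{bal}(f,Y)$ (the paper's Lemma~\ref{lem:balerr-formula}), convert it to an excess-risk identity, and then bound the noisy excess risk via uniform convergence of the empirical balanced error on $\hat Y$ (the paper's Theorem~\ref{thm:balacc}). The only cosmetic difference is in how uniform convergence is obtained---you condition on the class index sets, apply an off-the-shelf VC bound within each subsample, and Chernoff-bound the random $n_y$, whereas the paper derives a single-$h$ deviation bound and reruns symmetrization/growth-function for balanced error directly; both yield the same rate.
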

\vspace{-4mm}
\begin{proof}
  For space, we defer the proof and and bibliographic commentary to Appendix \ref{apdx:theory}.
  \vspace{-3mm}
\end{proof}
This bound formalizes the tradeoff between the \emph{precision} of the weak labels, measured by $\alpha$ and
$\gamma$, and the \emph{coverage}, measured by $n\P[Y\ne\abst]$, which for large enough samples is very close to the size of the \emph{covered} training set $\cT = \{(x_i, \hat{Y}(x_i)) : \hat{Y}(x_i) \ne \abst\}$.
Suppose we have a label model $\hat{Y}$ and an alternative label model $\hat{Y}'$ that abstains more often than $\hat{Y}$ (so $\P[Y'\ne\abst] < \P[\hat{Y} \ne \abst]$) but also has smaller values of $\alpha$ and $\gamma$.
Then according to the bound, an end model trained with $\hat{Y}'$ can have better performance, and
the empirical results in Section \ref{sec:experiments} confirm this trend.

This bound is useful for comparing two fixed label models $\hat{Y}, \hat{Y}'$ with different abstention rates and $(\alpha,\gamma)$ values.
However, we have been concerned in this paper with selecting a subset $\cT'$ of $\cT$ based on a single label model $\hat{Y}$, and training using $\cT'$.
We can represent this subset selection with a new set of pseudolabels $\{\tilde{Y}(x_i) : x_i \in \cT\}$ that abstains more than $\hat{Y}(x_i)$---i.e., points not chosen for $\cT'$ get $\tilde{Y}(x_i) = \abst$.
However, selection for $\cT'$ depends on sample-level statistics, so the $\tilde{Y}$ values are not i.i.d., which complicates the generalization bound.
We show in Appendix \ref{apdx:theory} that this can be remedied by a sample-splitting procedure: we use half of $\cT$ to define a refined label model $\tilde{Y} : \Psi_0 \to \cY \cup \{\abst\}$, and then use the other half of $\cT$ as the initial training set.
This allows us to effectively reduce to the case of two fixed label models $\hat{Y}, \tilde{Y}$ and apply Theorem~\ref{thm:main-theorem}.
We include the simpler $\hat{Y}$ versus $\hat{Y}'$ bound here because it captures the essential tradeoff without the technical difficulties.

\section{Limitations, Societal Impact, and Conclusion}
\label{sec:conclusion}
Surprisingly, using \emph{less} data can greatly improve the performance of weak supervision pipelines when that data is carefully selected.
By exploring the tradeoff between weak label precision and coverage, subset selection allows us to select a higher-quality training set without compromising generalization performance to the population pseudolabeling function.
This improves the accuracy of the end model on the true labels.
In Section \ref{sec:tradeoff-theory}, we showed that this tradeoff can be formalized in the special setting of conditional independence. By combining the cut statistic with good data representations, we developed a technique that improves performance for five different label models, over ten datasets, and three data modalities.
Additionally, the hyperparameter tuning burden is low.
We introduced one new hyperparameter $\beta$ (the coverage fraction) and showed that all other hyperparameters can be re-used from the full-coverage $\beta=1.0$ case, so existing tuned weak supervision pipelines can be easily adapted to use this technique.

However, this approach is not without limitations.
The cut statistic requires a good representation $\phi$ of the input data to work well.
Such a representation may not be available.
However, for image or text data, pretrained representations provide natural
choices for $\phi$.
Our results on the Census dataset in Section \ref{sec:other-data-modal} indicate that using hand-crafted features as $\phi$ can also work well.
Finally, as discussed at the end of Section \ref{sec:discussion}, subset selection can further bias the input distribution (except in special cases like the one in Section \ref{sec:tradeoff-theory}).
However, this is already an issue with current weak supervision methods.
Most methods only train on the covered data $\cT$.
Labeling functions are typically deterministic functions of the input example, (such as functions based on the presence of certain tokens) and so the support of the full training set $\cT$ is a strict subset of the support of the true input distribution, and $\cT$ may additionally have a skewed distribution over its support.
This underscores the need for (i) the use of a ground-truth validation set to ensure that the end model is an accurate predictor on the full distribution (ii) in high stakes settings, sub-group analyses such as those performed by \citep{seyyed2020chexclusion}, to ensure that the pseudolabels have not introduced bias against protected subgroups and (iii) the need for further theoretical understanding on why weakly supervised end models are able to perform well on the uncovered set $\{x: \mathbf{\Lambda}(x) = \abst\}$.

\begin{ack}
This work was supported by NSF AitF awards CCF-1637585 and CCF-1723344.
Thanks to Hussein Mozannar for helpful conversations on Section
\ref{sec:tradeoff-theory} and the pointer to \citet{woodworth2017learning}.
Thanks to Dr. Steven Horng for generously donating GPU-time on the BIDMC
computing cluster \citep{horng2022} and to NVIDIA for their donation of GPUs
also used in this work.
\end{ack}

\bibliographystyle{plainnat}
\bibliography{all}

\clearpage
\appendix
\setlength{\abovedisplayskip}{7pt}
\setlength{\belowdisplayskip}{7pt}
\section{Proofs from Section \ref{sec:tradeoff-theory}}
\label{apdx:theory}
To ease notation in this section, we consider the multimodal setting $\cX = \cX^{(0)} \times\cX^{(1)}$.
The extension to arbitrary $\phi_0(X)$, $\phi_1(X)$ is straightforward.
First, we fix an arbitrary label model $\hat{Y}$ and, we assume for this part that $\hat{Y}$ can be written as a function mapping single examples to pseudolabels: $\hat{Y}: \cX^{(0)} \to \cY\cup \{\abst\}$.
We first prove Theorem \ref{thm:main-theorem} for this case.
Second, we discuss the extention of this theorem to the case where the pseudolabels $\tilde{Y}$ come from some label model $\hat{Y}$ plus a subset selector such as the cut statistic.
This complicates the situation because the post-subselection pseudolabels $\{\tilde{Y}(x_i)\}$ (i.e., the output of the cut statistic on the training set) cannot be written as i.i.d. samples of a ``refined label model''
$\tilde{Y}: \cX^{(0)} \to \cY \cup \{\abst\}$.
We show how to use sample splitting to suitably define the population-level function $\tilde{Y}: \cX^{(0)} \to \cY \cup \{\abst\}$, which allows us to directly apply Theorem \ref{thm:main-theorem}.

\subsection{Proof of Theorem \ref{thm:main-theorem}.}

Suppose that $\cY = \{0,1\}$.
Recall the conditional independence assumption:
\begin{assumption*}[Conditional independence]
The random variables $X\vzero$, $X\vone$ are conditionally independent given the true
(unobserved) label $Y$.
That is, for any $A \subset \cX\vzero$, $B\subset \cX\vone$,
\[
\P_{X,Y}[X\vzero \in A, X\vone \in B | Y] = \P_{X,Y}[X\vzero\in A| Y]\P_{X,Y}[X\vone \in B| Y].
\]
\end{assumption*}
Let $\hat{Y}: \cX\vzero \to \cY \cup \{\abst\}$ be an arbitrary label model, and define:
\begin{align*}
  \alpha &= \P_{X,Y}[Y = 0 | \hat{Y}(X\vzero) = 1]\\
  \gamma &= \P_{X,Y}[Y = 1 | \hat{Y}(X\vzero) = 0]
\end{align*}
These parameters measure the amount of noise in $\hat{Y}$.
We assume throughout that $\P[\hat{Y} = y] > 0$ for $y\in\{0,1\}$ and that $\alpha + \gamma < 1$.
Note that this implies $\P[\hat{Y} \ne \abst | Y=y] > 0$ for all $y$, since otherwise either $\alpha$ or $\gamma$ is 1.
So there are pseudolabeled examples from both conditional distributions $\P[X | Y=y]$.

\paragraph{Definitions}
Let $\cF$ be a hypothesis class consisting of functions $f: \cX\vone \to \cY$.
Define the \emph{balanced error} of a classifier $f$ on labels $Z \in \{0,1\}$ as:
\[
err_{bal}(f, Z) = \frac{1}{2}(\P[f(X)=0|Z=1] + \P[f(X) = 1 | Z=0]).
\]
We will consider both $Z = Y$ and $Z = \hat{Y}$.
The \emph{empirical} balanced accuracy on a sample $S = \{(x_i, z_i)\}_{i=1}^n$ is given by:
\[
\widehat{\err}_{bal}(f, S) = \frac{1}{2}\left(\frac{\sum_{i=1}^n \ind_{f(x_i) = 0}\ind_{z_i = 1}}{\sum_{i=1}^n\ind_{z_i = 1}} + \frac{\sum_{i=1}^n \ind_{f(x_i) = 1}\ind_{z_i = 0}}{\sum_{i=1}^n\ind_{z_i = 0}}\right).
\]
Suppose we observe a weakly labeled training set $\widebar{\cT} := \{x_i\vzero, x_i\vone, \hat{Y}(x_i\vzero)\}_{i=1}^n$, where the $x_i$'s are drawn i.i.d. from the marginal distribution $\P_X$.
This differs from $\cT$ in the main text because we include both views $x\vzero, x\vone$ and also include the points where $\hat{Y}(x_i\vzero) = \abst$.

\begin{theorem*}
  \label{thm:main-theorem-apdx}
Suppose Assumption~\ref{asmp:condindep} holds, and that $\P[\hat{Y} = y] > 0$ for $y\in\{0,1\}$ and that $\alpha + \gamma < 1$.
Let $f\in \cF$ be an arbitrary classifier.
Then $f$'s true balanced error $\err_{bal}(f,Y)$ and $f$'s pseudolabel balanced error $\err_{bal}(f,\hat{Y})$ satisfy:
\[
\err_{bal}(f,Y) = \frac{1}{1-\alpha-\gamma}\left[\err_{bal}(f,\hat{Y}) - \frac{\alpha+\gamma}{2}\right].
\]
Now let $f^* := \inf_{f\in \cF} err_{bal}(f,Y)$ be the classifier with optimal balanced accuracy on the true labels. %
Suppose that $\hat{f}$ is the classifier obtained by minimizing the empirical balanced accuracy on the weakly-labeled dataset $\cT = \{(x_i\vone, \hat{Y}(x_i\vzero))\}$: $\hat{f} := \argmin_{f\in \cF} \widehat{\err}_{bal}(f,\cT)$.
Note that the points $\{x_i : \hat{Y}(x_i\vzero) = \abst\}$ do not feature at all in this empirical loss, so we can safely discard them for the training step.
Then for any $\delta > 0$ the following holds with probability at least $1-\delta$ over the
sampling of $\widebar{\cT}$:
\begin{align*}
\err_{bal}(f,Y) - \err_{bal}(f^*, Y) \le \widetilde{\mathcal{O}}\left(\frac{1}{1-\alpha-\gamma}\sqrt{\frac{VC(\cF) + \log\frac{1}{\delta}}{n\P[\hat{Y}\ne\abst]\min_{y}\P[\hat{Y} = y | \hat{Y} \ne \abst]}}\right),
\end{align*}
where $\tilde{\cO}$ hides log factors in $m$ and $\VC(\cF)$.
\end{theorem*}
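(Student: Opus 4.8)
The plan is to first prove the exact population identity between $\err_{bal}(f,Y)$ and $\err_{bal}(f,\hat Y)$, and then to bootstrap a standard ERM-style generalization bound from it, the only non-routine ingredient being the treatment of the ratio form of the (empirical) balanced error and of the random sizes of the two pseudolabel classes. For the identity, fix $f\in\cF$ and write $a=\P[f(X)=1\mid Y=0]$ and $b=\P[f(X)=0\mid Y=1]$, so $\err_{bal}(f,Y)=\tfrac12(a+b)$. Since $\hat Y$ depends only on $X\vzero$ and $f$ only on $X\vone$, Assumption~\ref{asmp:condindep} gives $\P[f(X)=c\mid\hat Y=d,Y=y]=\P[f(X)=c\mid Y=y]$ for all $c,d,y$. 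Expanding over the value of $Y$,
\[
\P[f(X)=0\mid\hat Y=1]=(1-a)\,\P[Y=0\mid\hat Y=1]+b\,\P[Y=1\mid\hat Y=1]=(1-a)\alpha+b(1-\alpha),
\]
and symmetrically $\P[f(X)=1\mid\hat Y=0]=a(1-\gamma)+(1-b)\gamma$. Averaging these two quantities and collecting terms gives $\err_{bal}(f,\hat Y)=\tfrac{\alpha+\gamma}{2}+(1-\alpha-\gamma)\,\err_{bal}(f,Y)$, which rearranges (using $\alpha+\gamma<1$) to the claimed identity.

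For the generalization bound, observe that this identity is affine in $\err_{bal}(f,Y)$ with positive slope $1/(1-\alpha-\gamma)$; applying it to $\hat f$ and taking the infimum through the increasing map yields
\[
\err_{bal}(\hat f,Y)-\err_{bal}(f^*,Y)=\tfrac{1}{1-\alpha-\gamma}\big(\err_{bal}(\hat f,\hat Y)-\textstyle\inf_{f\in\cF}\err_{bal}(f,\hat Y)\big),
\]
where $\err_{bal}(f^*,Y)$ is read as $\inf_{f}\err_{bal}(f,Y)$ (if the infimum is not attained, compare $\hat f$ to a near-minimizer and let the slack tend to $0$). Since the points with $\hat Y(x_i\vzero)=\abst$ do not enter $\widehat{\err}_{bal}(f,\cT)$, and $\hat f=\argmin_{f}\widehat{\err}_{bal}(f,\cT)$, the usual two-sided ERM argument gives $\err_{bal}(\hat f,\hat Y)-\inf_f\err_{bal}(f,\hat Y)\le 2\sup_{f\in\cF}\big|\widehat{\err}_{bal}(f,\cT)-\err_{bal}(f,\hat Y)\big|$, so it remains to bound this uniform deviation.

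The main obstacle is that $\widehat{\err}_{bal}$ is a sum of two empirical ratios whose denominators $m_y:=|\{i:\hat Y(x_i\vzero)=y\}|$ are random. I would handle this by conditioning on the vector $(\hat Y(x_1\vzero),\dots,\hat Y(x_n\vzero))$: because $\hat Y$ is a deterministic function of $X\vzero$ alone, conditionally on this vector the second-view points $\{x_i\vone:\hat Y(x_i\vzero)=y\}$ are i.i.d.\ draws from $\P[X\vone\mid\hat Y=y]$, with no selection bias in $X\vone$. A standard VC uniform-deviation bound applied (separately for $y\in\{0,1\}$) to the class $\{x\vone\mapsto\ind[f(x\vone)=c]:f\in\cF\}$ — whose VC dimension equals $\VC(\cF)$ — then bounds each of $\big|\widehat\P[f=0\mid\hat Y=1]-\P[f=0\mid\hat Y=1]\big|$ and $\big|\widehat\P[f=1\mid\hat Y=0]-\P[f=1\mid\hat Y=0]\big|$ by $O\big(\sqrt{(\VC(\cF)\log m_y+\log\tfrac1\delta)/m_y}\big)$ uniformly over $\cF$ (the $\log m_y$ factor, coming from the growth function, is absorbed into $\widetilde{\cO}$). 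Separately, a multiplicative Chernoff bound yields $m_y\ge\tfrac12 n\P[\hat Y=y]$ for both $y$ with probability $1-\delta$ (the regime $n\P[\hat Y\ne\abst]\gtrsim\log\tfrac1\delta$, needed for this, is automatically implied by the bound being nontrivial). A union bound over these $O(1)$ events, the substitution $\min_y m_y\ge\tfrac12 n\min_y\P[\hat Y=y]=\tfrac12 n\,\P[\hat Y\ne\abst]\min_y\P[\hat Y=y\mid\hat Y\ne\abst]$, and multiplication through by $1/(1-\alpha-\gamma)$ then give exactly the stated bound. Finally, the extension to the cut-statistic pseudolabels $\tilde Y$ follows by the sample-splitting reduction to two fixed label models sketched in the main text, applying Theorem~\ref{thm:main-theorem} to the half used for training.
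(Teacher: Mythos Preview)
Your proposal is correct. The population identity is derived the same way as in the paper (you add the two conditional error rates directly; the paper writes the same two linear equations and inverts a $2\times 2$ matrix, which is cosmetically different but identical in content), and the reduction of excess true-balanced risk to excess pseudolabel-balanced risk, followed by the standard ERM two-sided bound, matches the paper exactly.

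The one genuine methodological difference is in how the uniform deviation of the empirical balanced error is obtained. You condition on the realized pseudolabel vector, observe that the $x_i\vone$ in each class are then i.i.d.\ from $\P[X\vone\mid\hat Y=y]$, apply an off-the-shelf VC uniform-convergence bound per class (yielding $\widetilde O(\sqrt{(\VC(\cF)+\log(1/\delta))/m_y})$), and finally replace the random $m_y$ by $\tfrac12 n\P[\hat Y=y]$ via a multiplicative Chernoff bound and a union bound. The paper instead first proves a \emph{pointwise} deviation bound for a fixed $h$ (using the same conditioning-on-$\cI_y$ plus Chernoff-on-$n_y^S$ trick, combined with Hoeffding and an explicit balancing of the two error terms), and then establishes a custom symmetrization lemma tailored to the balanced-error functional before invoking the growth function and Sauer--Shelah. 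Your route is more modular and arguably more elementary, since it black-boxes the VC step rather than re-deriving symmetrization for a ratio-type loss; the paper's route is more self-contained and makes explicit that symmetrization still goes through for this non-decomposable risk. Both land on the same $\widetilde{\cO}$ rate.
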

\begin{proof}
  Lemma \ref{lem:balerr-formula} proves that for any $f\in \cF$,
\[
\err_{bal}(f,Y) = \frac{1}{1-\alpha-\gamma}\left[\err_{bal}(f,\hat{Y}) - \frac{\alpha+\gamma}{2}\right].
\]
Subtracting $\err_{bal}(f^*, \hat{Y})$ from both sides:
\[
\err_{bal}(\hat{f},Y) - \err_{bal}(f^*,Y) = \frac{1}{1-\alpha-\gamma}\left[\err_{bal}(\hat{f},\hat{Y}) - \err_{bal}(f^*,\hat{Y})\right].
\]
Let $\hat{f}^*$ be the classifier in $\cF$ with optimal population-level balanced error on $\hat{Y}$:
\[
  \hat{f}^* := \argmin_{f\in \cF}\err_{bal}(f,\hat{Y}).
\]
Then
\begin{equation}
  \label{eqn:balacc-bound-intermediate}
\err_{bal}(\hat{f},Y) - \err_{bal}(f^*,Y) \le \frac{1}{1-\alpha-\gamma}\left[\err_{bal}(\hat{f},\hat{Y}) - \err_{bal}(\hat{f}^*,\hat{Y})\right].
\end{equation}
Now we need to control $\err_{bal}(\hat{f},\hat{Y}) - \err_{bal}(\hat{f}^*,\hat{Y})$, the excess risk of $\hat{f}$ on the weak labels $\hat{Y}$.
From $\widebar{\cT}$, we can form a sample of $n$ i.i.d. points $\cT := \{(x_i\vone, \hat{y}_i)\}_{i=1}^n$ from the joint distribution: $\P_X[X\vone, \hat{Y}(X\vzero)]$.
Again, this differs from $\cT$ in the main text because it includes points where $\hat{y}_i = \abst$.

Theorem \ref{thm:balacc} implies that for any $\delta > 0$, with probability at least $1-\delta$ over the sampling of $\{(x_i\vone, \hat{y}_i)\}_{i=1}^n$, we have the following deviation bound:
\[
  \sup_{f\in\cF} \P[|\widehat{\err}_{bal}(f,\cT) - \err_{bal}(f,\hat{Y})|] \le \widetilde{\mathcal{O}}\left(\sqrt{\frac{VC(\cF) + \log\frac{1}{\delta}}{n\P[\hat{Y}\ne\abst]\min_{y}\P[\hat{Y} = y | \hat{Y} \ne \abst]}}\right).
\]
We prove Theorem \ref{thm:balacc} in the self-contained Section \ref{sec:balacc-result}.
We can easily turn this uniform convergence result into an excess risk bound for $\hat{f}$ with a standard sequence of inequalities.
We drop the subscript and remove $\hat{Y}$, $\cT$ from the error arguments for convenience, so $\err(\cdot)$ in the following refers to $\err_{bal}(\cdot, \hat{Y})$ and $\widehat{\err}(\cdot)$ refers to $\widehat{\err}(\cdot, \cT)$.
\begin{align*}
  \err(\hat{f}) - \err(\hat{f}^*) &= \widehat{\err}(\hat{f}) - \widehat{\err}(\hat{f}^*) + \err(f) - \widehat{\err}(f) + \widehat{\err}(\hat{f}^*) - \err(\hat{f}^*)\\
                                  &\le \err(f) - \widehat{\err}(f) + \widehat{\err}(\hat{f}^*) - \err(\hat{f}^*)\\
                                  &\le |\err(f) - \widehat{\err}(f)| + |\widehat{\err}(\hat{f}^*) - \err(\hat{f}^*)|\\
                                  &\le_{\text{w.p. } 1-\delta} \widetilde{\mathcal{O}}\left(\sqrt{\frac{VC(\cF) + \log\frac{1}{\delta}}{n\P[\hat{Y}\ne\abst]\min_{y}\P[\hat{Y} = y | \hat{Y} \ne \abst]}}\right).
\end{align*}
The first inequality used that $\widehat{\err}(\hat{f}) - \widehat{\err}(\hat{f}^*) \le 0$ since $\hat{f}$ is the empirical minimizer, and the last inequality applied the deviation bound to each term.
Hence we have shown that with probability at least $1-\delta$,
\[
  \err_{bal}(\hat{f},\hat{Y}) - \err_{bal}(\hat{f}^*,\hat{Y}) \le \widetilde{\mathcal{O}}\left(\sqrt{\frac{VC(\cF) + \log\frac{1}{\delta}}{n\P[\hat{Y}\ne\abst] \min_{y}\P[\hat{Y} = y | \hat{Y} \ne \abst]}}\right).
\]
Plugging this in to \eqref{eqn:balacc-bound-intermediate} completes the proof of Theorem \ref{thm:main-theorem}.
\end{proof}

\begin{lemma}[\cite{scott2013classification}, \cite{menon2015learning}]
  \label{lem:balerr-formula}
  Suppose Assumption \ref{asmp:condindep} holds and that $\alpha + \gamma < 1$, $\P[\hat{Y} = y] > 0$ for $y \in \{0,1\}$. Then for any $f\in \cF$, the balanced errors on $\hat{Y}$ and $Y$ satisfy the following relationship:
\[
\err_{bal}(f,Y) = \frac{1}{1-\alpha-\gamma}\left[\err_{bal}(f,\hat{Y}) - \frac{\alpha+\gamma}{2}\right].
\]
\end{lemma}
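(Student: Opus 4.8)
The plan is to express the class-conditional confusion rates of $f$ with respect to the true label $Y$ in terms of the class-conditional confusion rates with respect to the pseudolabel $\hat{Y}$, using only the conditional-independence assumption and the definitions of $\alpha$ and $\gamma$. First I would restrict attention to the covered set $\{\hat{Y} \ne \abst\}$, since the balanced error on $\hat{Y}$ is defined only over that set, and work with the conditional distribution $\P[\cdot \mid \hat{Y} \ne \abst]$ throughout (in the multimodal / conditional-independence setting, conditioning on $\hat{Y}\ne\abst$ is conditioning on an event in $\sigma(\psi_0(X))$, so it does not destroy conditional independence of $\psi_1(X)$ from $\hat{Y}$ given $Y$). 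The key object is the ``flip'' probabilities: $\P[\hat{Y}=0 \mid Y=1]$ and $\P[\hat{Y}=1 \mid Y=0]$. These are \emph{not} $\alpha,\gamma$ — the latter are the reverse conditionals $\P[Y=0\mid\hat{Y}=1]$, $\P[Y=1\mid\hat{Y}=0]$ — so a Bayes-rule bookkeeping step is needed to pass between them; this is the one place where $\P[\hat{Y}=y]>0$ and $\alpha+\gamma<1$ get used to guarantee the relevant quantities are well-defined and the matrix inversion is legitimate.

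The main computation proceeds as follows. Fix $f$ and write $e_1 = \P[f(X)=0 \mid Y=1]$, $e_0 = \P[f(X)=1\mid Y=0]$ (the true-label per-class errors), and $\hat e_1 = \P[f(X)=0\mid \hat Y=1]$, $\hat e_0 = \P[f(X)=1\mid \hat Y=0]$ (the pseudolabel per-class errors). Using the law of total probability over $Y$ conditioned on $\{\hat Y = 1\}$, and crucially invoking Assumption~\ref{asmp:condindep} to replace $\P[f(X)=0 \mid Y=y, \hat Y=1]$ by $\P[f(X)=0\mid Y=y]$ (since $f(X)=f(\psi_1(X))$ is conditionally independent of $\hat Y=\hat Y(\psi_0(X))$ given $Y$), one gets
\[
\hat e_1 = \P[Y=1\mid\hat Y=1]\,e_1 + \P[Y=0\mid\hat Y=1]\,(1-e_0) = (1-\alpha)e_1 + \alpha(1-e_0),
\]
and symmetrically $\hat e_0 = (1-\gamma)e_0 + \gamma(1-e_1)$. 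Averaging these two identities,
\[
\err_{bal}(f,\hat Y) = \tfrac12(\hat e_0 + \hat e_1) = (1-\alpha-\gamma)\cdot\tfrac12(e_0+e_1) + \tfrac{\alpha+\gamma}{2} = (1-\alpha-\gamma)\,\err_{bal}(f,Y) + \tfrac{\alpha+\gamma}{2},
\]
where the cross terms $\alpha - \alpha e_1$ and $\gamma - \gamma e_0$ from the two lines combine cleanly because the $e_1$ and $e_0$ coefficients collect into $-(\alpha+\gamma)\cdot\tfrac12(e_0+e_1)$ plus the constant $\tfrac{\alpha+\gamma}{2}$. Rearranging and dividing by $1-\alpha-\gamma>0$ yields the claimed formula.

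I expect the main obstacle to be the correct handling of abstentions and the direction of conditioning: one must be careful that $\alpha,\gamma$ as \emph{defined} in the theorem are already the reverse conditionals $\P[Y\mid\hat Y]$ (not $\P[\hat Y\mid Y]$), so that the total-probability expansion above lands directly on $\alpha$ and $\gamma$ without an intermediate Bayes inversion — and to verify that conditioning on the covered event keeps the conditional-independence identity valid so that the substitution $\P[f(X)=0\mid Y=y,\hat Y=1] = \P[f(X)=0\mid Y=y]$ is justified. Everything else is the routine two-line linear algebra above. (This recovers the standard class-conditional-noise correction of \cite{scott2013classification,menon2015learning}, specialized to balanced error.)
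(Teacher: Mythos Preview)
Your proposal is correct and follows essentially the same route as the paper: derive the two per-class identities $\hat e_1 = (1-\alpha)e_1 + \alpha(1-e_0)$ and $\hat e_0 = (1-\gamma)e_0 + \gamma(1-e_1)$ via the law of total probability and Assumption~\ref{asmp:condindep}, then combine. The only cosmetic difference is that the paper packages the two identities into a $2\times 2$ linear system, inverts it to solve for $\FPR(f)$ and $\FNR(f)$ individually, and then adds; you instead add the two identities directly and observe the cross terms collapse to $(1-\alpha-\gamma)(e_0+e_1)+(\alpha+\gamma)$ --- a small shortcut that avoids the matrix inversion but is not a genuinely different argument. Your handling of abstentions is also fine (and slightly overcautious): since $\hat e_1,\hat e_0$ already condition on $\hat Y\in\{0,1\}$, no separate restriction to the covered set is needed, and the conditional-independence substitution $\P[f(X)=0\mid Y=y,\hat Y=1]=\P[f(X)=0\mid Y=y]$ goes through exactly as you say.
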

\begin{proof}
  The formula relating $\err_{bal}(f,\hat{Y})$ and $\err_{bal}(f,Y)$ is due to \citet{scott2013classification, menon2015learning}.
  We reprove it here to show that $\P[\hat{Y} = \abst] > 0$ does not affect the result, since those works consider $\hat{Y}(x_i\vzero) \in \{0,1\}$.
  Define:
  \begin{align*}
    \widehat{\FNR}(f) &= \P[f(X\vone) = 0 | \hat{Y}(X\vzero) = 1]\\
    \widehat{\FPR}(f) &= \P[f(X\vone) = 1 | \hat{Y}(X\vzero) = 0]\\
    {\FNR}(f) &= \P[f(X\vone) = 0 | Y = 1]\\
    {\FPR}(f) &= \P[f(X\vone) = 1 | Y = 0].
  \end{align*}
  Observe that:
  \begin{align*}
    \widehat{\FNR}(f) &= \P[f(X\vone) = 0 | \hat{Y}(X\vzero) = 1] = \sum_{y}\P[f(X\vone) = 0, Y=y | \hat{Y}(X\vzero) = 1]\\
                      &= \sum_{y}\frac{\P[f(X\vone) = 0, \hat{Y}(X\vzero) = 1 | Y = y]\P[Y=y]}{\P[\hat{Y}(X\vzero) = 1]}\\
                      &= \sum_{y}\frac{\P[f(X\vone) = 0 | Y=y]\P[\hat{Y}(X\vzero) = 1 | Y = y]\P[Y=y]}{\P[\hat{Y}(X\vzero) = 1]}\\
                      &= \sum_{y}\P[f(X\vone) = 0 | Y=y]\P[Y=y|\hat{Y}(X\vzero) = 1]\\
                      &= \P[f = 0 | Y = 0]\P[Y=0|\hat{Y} = 1] + \P[f = 0 | Y = 1]\P[Y=1|\hat{Y} = 1]\\
                      &= (1-\FPR(f))\alpha + \FNR(f)(1-\alpha).
  \end{align*}
  Similarly,
  \[
    \widehat{\FPR}(f) = (1-\FNR(f))\gamma + \FPR(f)(1-\gamma).
  \]
  Collecting these equalities gives:
  \[
    \begin{bmatrix}
      (1-\gamma) & -\gamma\\
      -\alpha & (1-\alpha)
    \end{bmatrix}
    \begin{bmatrix}
      \FPR(f)\\
      \FNR(f)\\
    \end{bmatrix}
    +
    \begin{bmatrix}
      \gamma\\
      \alpha\\
    \end{bmatrix}
    =
    \begin{bmatrix}
      \widehat{\FPR}(f)\\
      \widehat{\FNR}(f)
    \end{bmatrix}
  \]
  The coefficient matrix is invertible since we assumed $\alpha + \gamma < 1$.
  Multiplying both sides by its inverse gives:
  \begin{align*}
    \FPR(f) &= \frac{1}{1-\alpha-\gamma}\left((1-\alpha)\widehat{\FPR}(f) + \gamma\widehat{\FNR}(f) - \gamma\right)\\
    \FNR(f) &= \frac{1}{1-\alpha-\gamma}\left(\alpha\widehat{\FPR}(f) + (1-\gamma)\widehat{\FNR}(f) - \alpha\right)
  \end{align*}
  Finally, plugging these in to $\err_{bal}(f,Y) = \frac{1}{2}\left(\FPR(f) + \FNR(f)\right)$ gives the first result.
\end{proof}

\subsection{Dealing with subset selection}
Suppose $\hat{Y}$ is some fixed label model (such as majority vote).
Let $\widebar{\cT} = \{(x^{(0)}_i, x^{(1)}_i,\hat{Y}(x_i^{(0)}))\}_{i=1}^n$ be the full weakly-labeled sample, including points where $\hat{Y} = \abst$.
Suppose we also have a sample of $m$ \emph{reference points} $\widebar{\cR} = \{(r^{(0)}_i, r^{(1)}_i,\hat{Y}(r_i^{(0)}))\}_{i=1}^m$ from the same distribution.

The subset selection methods considered in this paper are rank-and-select: they first rank examples according to a score function, then select the best $\beta$\% for inclusion in the training subset.
In other words, there is a score function $\cS(x_i) \to \mathbb{R}$ and a threshold $\tau_{\beta}$ such that all points with $\cS(x_i) \le \tau_{\beta}$ are included in the training subset.
We use the convention that lower is better here, since that is true for both entropy and cut statistic scores.
Equivalently, we can think of subset selection as defining a new label model $\tilde{Y}$ where:
\[
  \tilde{Y}(x) = \begin{cases}
                   \hat{Y}(x) & \cS(x) \le \tau_{\beta}\\
                   \abst & \cS(x) > \tau_{\beta}.
                 \end{cases}
\]

In the main text, both the cut statistic score $\cS$ and the threshold $\tau_{\beta}$ depend on the full training sample $\widebar{\cT}$.
For entropy scoring, the threshold $\tau_{\beta}$ depends on the full training sample $\widebar{\cT}$.
This dependence means that the points selected for the training subset are not i.i.d., and, in the case of the cut statistic, that $\tilde{Y}$ is only well-defined for points in $\widebar{\cT}$.
We now show how to solve these issues for the cut statistic by using the reference sample $\widebar{\cR}$ to compute the score function and threshold.

For each point $x_i$, we compute the cut statistic score over $\widebar{\cR} \cup \{(x_i\vzero, x_i\vone, \hat{Y}(x_i\vzero))\}$.
That is, instead of using $\widebar{\cT}$, we insert the point $x_i$ on its own into the reference sample, and then compute the cut statistic score.
This way the scores $\{\cS(x_i) : x_i \in \widebar{\cT}\}$ are independent.
To compute the threshold $\tau_{\beta}$, we compute the scores for every point $r_i \in \widebar{\cR}$ and compute the threshold $\tau_{\beta}$ corresponding to the $100\cdot\beta$ percentile, so only a $\beta$ fraction of points in $\widebar{\cR}$ have $\cS(r_i) \le \tau_{\beta}$.
We then include a point $x_i \in \widebar{\cT}$ in the training subset if $\cS(x_i) \le \tau_{\beta}$, i.e., if its score would've landed it in the lowest-scoring $\beta$\% of the reference sample.
This way, the selected points in are i.i.d., since neither the threshold nor the score itself depends on the sampling of other points in $\widebar{\cT}$.
In other words, the refined sample $\{(x_i\vzero, x_i\vone, \tilde{Y}(x_i\vzero))\}$, with $\tilde{Y}$ constructed using the reference sample, is i.i.d., so we can plug this sample in to Theorem \ref{thm:main-theorem}.
Observe that we can now define refined error parameters:\begin{align*}
           \tilde\alpha &= \P_{X,Y}[Y = 0 | \tilde{Y}(X\vzero) = 1]\\
           \tilde\gamma &= \P_{X,Y}[Y = 1 | \tilde{Y}(X\vzero) = 0];
         \end{align*}
these are well-defined because $\tilde{Y}$ only depends on the reference sample $\widebar{R}$, and hence $\tilde{Y}(X\vzero)$ is defined for every $X\vzero \in \cX\vzero$.

\begin{corollary}
  \label{thm:subset-corollary}
  Suppose $\tilde{Y}$ are the refined pseudolabels constructed from $\hat{Y}$ by sub-selecting using the reference sample $\widebar{\cR}$, as described above, and that Assumption \ref{asmp:condindep} holds.
  Then with probability at least $1-\delta$ over the sampling of $\{(x_i\vzero, x_i\vone, \tilde{Y}(x_i\vzero))\}_{i=1}^n$,
\begin{align*}
\err_{bal}(f,Y) - \err_{bal}(f^*, Y) \le \widetilde{\mathcal{O}}\left(\frac{1}{1-\tilde\alpha-\tilde\gamma}\sqrt{\frac{VC(\cF) + \log\frac{1}{\delta}}{n\P[\tilde{Y}\ne\abst]\min_{y}\P[\tilde{Y} = y | \tilde Y \ne \abst]}}\right),
\end{align*}
where $f$ and $f^*$ are the empirical minimizer of the pseudolabel zero-one loss on $\tilde{Y}$ and population minimizer of the true zero-one loss on $Y$, respectively.
\end{corollary}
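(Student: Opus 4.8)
The plan is to reduce the corollary to a single application of Theorem~\ref{thm:main-theorem}, with the refined label model $\tilde{Y}$ playing the role of the fixed label model $\hat{Y}$. The only obstruction to a naive substitution is that, a priori, $\tilde{Y}(x_i)$ depends on the rest of the training sample through the score $\cS$ and the threshold $\tau_{\beta}$; the reference-sample construction in the preceding subsection is designed precisely to break this dependence, so the first move is to condition on $\widebar{\cR}$. Once $\widebar{\cR}$ is fixed, $\tau_{\beta}$ is a fixed number, and for each $x$ the score $\cS(x)$ computed over $\widebar{\cR}\cup\{x\}$ is a deterministic function of $x$ alone. Provided the cut-statistic representation $\phi$ is a function of $\psi_0$ (the natural choice when $\phi$ is built from the same view the labeling functions see), both $\hat{Y}$ and $\cS$ depend on $x$ only through $\psi_0(x)$, so $\tilde{Y}$ is a well-defined deterministic label model $\tilde{Y}:\Psi_0\to\cY\cup\{\abst\}$, with $\tilde{Y}(X)=\tilde{Y}(\psi_0(X))$. (For entropy scoring the situation is even simpler, since only $\tau_{\beta}$, not the score, depends on the full sample.)

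With $\widebar{\cR}$ fixed, the refined training triples $\{(x_i\vzero,x_i\vone,\tilde{Y}(x_i\vzero))\}_{i=1}^n$ are i.i.d., being the image of the i.i.d.\ sample $\{x_i\}$ under the fixed map $x\mapsto(\psi_0(x),\psi_1(x),\tilde{Y}(\psi_0(x)))$. The conditional-independence assumption is unaffected: $\widebar{\cR}$ is independent of the training sample and the law of a fresh $(\psi_0(X),\psi_1(X),Y)$ is unchanged, so $\psi_0(X)\perp\psi_1(X)\mid Y$ still holds, and since $\tilde{Y}$ is $\psi_0$-measurable the errors $\ind[\tilde{Y}(X)\ne Y]$ remain conditionally independent of $\psi_1(X)$ given $Y$ --- i.e., the class-conditional-noise reduction underlying Theorem~\ref{thm:main-theorem} goes through verbatim. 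The remaining hypotheses of that theorem, $\P[\tilde{Y}=y]>0$ for $y\in\{0,1\}$ and $\tilde\alpha+\tilde\gamma<1$, are exactly the conditions that make $\tilde\alpha,\tilde\gamma$ and the stated bound well-defined; we take them as part of the hypothesis of the corollary (they hold, e.g., whenever $\beta$ is not small enough to empty a pseudo-class and the sub-selection does not push $\tilde{Y}$ below chance level).

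It then remains only to apply Theorem~\ref{thm:main-theorem} with $\tilde{Y}$, $\tilde\alpha$, $\tilde\gamma$, $\P[\tilde{Y}\ne\abst]$ in place of $\hat{Y}$, $\alpha$, $\gamma$, $\P[\hat{Y}\ne\abst]$: conditionally on $\widebar{\cR}$, for every $\delta>0$, with probability at least $1-\delta$ over the training sample,
\[
\err_{bal}(f,Y)-\err_{bal}(f^*,Y)\le\widetilde{\mathcal{O}}\!\left(\frac{1}{1-\tilde\alpha-\tilde\gamma}\sqrt{\frac{\VC(\cF)+\log\frac{1}{\delta}}{n\,\P[\tilde{Y}\ne\abst]\,\min_{y}\P[\tilde{Y}=y\mid\tilde{Y}\ne\abst]}}\right),
\]
which is the claimed bound; since it holds for every realization of $\widebar{\cR}$ (with $\tilde\alpha,\tilde\gamma,\P[\tilde{Y}\ne\abst]$ read off that realization) it holds as stated, and one can optionally integrate over $\widebar{\cR}$ to phrase it unconditionally.

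I expect the only genuine subtlety to be the first paragraph: verifying carefully that, after conditioning on $\widebar{\cR}$, $\tilde{Y}$ is a bona fide $\psi_0$-measurable deterministic label model so that the CCN structure of Theorem~\ref{thm:main-theorem} is preserved, and keeping straight that $\tilde\alpha,\tilde\gamma$ denote the reference-sample-conditional noise rates of $\tilde{Y}$ rather than any sample-level statistic. Everything downstream of that is a direct substitution into the already-established theorem.
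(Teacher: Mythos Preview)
Your proposal is correct and follows essentially the same route as the paper: the paper's proof is a one-line invocation of Theorem~\ref{thm:main-theorem} applied to $\tilde{Y}$, noting that the reference-sample construction makes the refined training triples i.i.d. You are in fact slightly more careful than the paper in flagging that $\phi$ (and hence $\cS$) must be $\psi_0$-measurable for $\tilde{Y}$ to be a bona fide $\psi_0$-measurable label model, and in making explicit the implicit nondegeneracy hypotheses $\P[\tilde{Y}=y]>0$ and $\tilde\alpha+\tilde\gamma<1$.
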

\begin{proof}
  This follows directly from plugging in the sample $\{(x_i\vzero, x_i\vone, \tilde{Y}(x_i\vzero))\}$ and refined label model $\tilde{Y}$ into Theorem \ref{thm:main-theorem}, since we constructed that sample to be i.i.d..
\end{proof}

Corollary \ref{thm:subset-corollary} shows how we can replace $\hat{Y}$ with its refined version $\tilde{Y}$, which equals $\hat{Y}$ whenever $\tilde{Y}\ne\abst$, but abstains more often.
The new bound depends on the errors of $\tilde{Y}$, measured by $\tilde\alpha$ and $\tilde\gamma$, which we empirically showed tend to be smaller than the errors $\alpha$ and $\gamma$ of $\hat{Y}$.
It also depends on the new abstention rate $n\P[\tilde{Y}\ne \abst]$, which we expect to be roughly (but not exactly, since we only compare to percentiles from the reference sample) equal to $n\beta\P[\hat Y \ne \abst]$.
Hence setting $\beta < 1$ trades off between $n\beta$ term in the denominator and the hopefully lower error terms $\tilde\alpha$ and $\tilde\gamma$.

\subsection{Balanced error generalization bound: Notation and result}
\label{sec:balacc-result}
The notation in this section is self-contained and slightly differs from that of previous sections.
Let $\cX$ be an input space and $\cY = \{0,1,\abst\}$ be the (binary) label space + an abstention symbol.
Let $\cH$ be a class of functions mapping $\cX \to \{0,1\}$.
We assume $(X,Y) \in \cX \times \cY$ is a pair of random variables distributed according to an unknown distribution $\P$.
We observe a sequence of $n$ i.i.d. pairs $(X_i, Y_i)$ sampled according
to $\P$, and the goal is to learn a classifier $h \in \cH$ with low \emph{balanced error}:
\[
R(h) := \frac{1}{2}\left(\P[h(X) = 1 | Y=0] + \P[h(X) = 0 | Y=1]\right),
\]
To measure classifier performance from our finite sample, we use the \emph{empirical balanced error}:
\[
R_n(h) := \frac{1}{2}\left(\frac{\sum_{i=1}^n \ind_{h(X_i) = 1}\ind_{Y_i = 0}}{\sum_{i=1}^n\ind_{Y_i = 0}} + \frac{\sum_{i=1}^n \ind_{h(X_i) = 0}\ind_{Y_i = 1}}{\sum_{i=1}^n\ind_{Y_i = 1}}\right)
\]
Note that the points where $Y=\abst$ do not appear in either expression.
The goal is to derive a bound on the \emph{generalization gap}
$R(h) - R_n(h)$
for a classifier $\hat{h}$ that is learned from (and hence, depends on) the the finite sample $\{(X_i, Y_i) : i\}$.
The challenge lies in the presence of random variables in the denominator of $R_n(h)$, so unlike the empirical zero-one loss, it cannot be written simply as $\frac{1}{n}\sum_{i=1}^ng(h,x,y)$ for some indicator function $g$.

The following theorem gives a uniform (over $\cH$) convergence result for the balanced error.
The key technique used in the proof is essentially due to \citet{woodworth2017learning}.
\begin{theorem}
\label{thm:balacc}
For any $\delta \in (0,1)$ and any distribution $P$, with probability at least $1-\delta$ over the sampling of $\{(X_i,Y_i)\}_{i=1}^n$,
\[
\sup_{h\in \cH} R(h) - R_n(h) \le \widetilde{\cO}\left(\sqrt{\frac{\VC(\cH) + \log\frac{1}{\delta}}{n\P[Y\ne\abst]\min_{y\in\{0,1\}}\P[Y=y|Y\ne\abst]}}\right),
\]
where $\widetilde{\cO}$ hides log factors in $n$ and $\VC(\cH)$.
\end{theorem}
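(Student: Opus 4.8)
The plan is to handle the random denominators in $R_n$ by conditioning on the label sequence $Y_1,\dots,Y_n$, following the strategy of \citet{woodworth2017learning}. Write $n_0 = \sum_{i=1}^n \ind_{Y_i=0}$ and $n_1 = \sum_{i=1}^n \ind_{Y_i=1}$. Conditioned on the entire vector $(Y_1,\dots,Y_n)$, the counts $n_0,n_1$ are fixed, the covariates $\{X_i : Y_i = 0\}$ are $n_0$ i.i.d.\ draws from $\P[X \mid Y = 0]$, and these are independent of $\{X_i : Y_i = 1\}$, which are $n_1$ i.i.d.\ draws from $\P[X \mid Y=1]$ (the $X_i$ with $Y_i=\abst$ never appear). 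So conditionally, $R_n(h)$ is exactly $\tfrac12$ times the sum of two ordinary empirical error terms, one on $n_0$ samples and one on $n_1$ samples, and off-the-shelf VC machinery applies to each.

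First I would invoke the standard VC uniform-convergence inequality separately on each conditional sample: conditioned on the labels, with probability at least $1-\delta/3$ over $\{X_i : Y_i = 0\}$,
\[
\sup_{h\in\cH}\Big|\P[h(X)=1\mid Y=0] - \tfrac{1}{n_0}\!\!\sum_{i:Y_i=0}\!\!\ind_{h(X_i)=1}\Big| \le \widetilde{\cO}\!\left(\sqrt{\tfrac{\VC(\cH)+\log(1/\delta)}{n_0}}\right),
\]
and the symmetric statement for $Y=1$ with probability $1-\delta/3$. Since these hold for every realization of the label vector, they hold unconditionally with probability $\ge 1-2\delta/3$.

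Second, I would control the counts: $n_0 \sim \mathrm{Binomial}(n,\P[Y=0])$, so a Chernoff bound gives $n_0 \ge \tfrac12 n\,\P[Y=0]$ with probability $\ge 1 - e^{-n\P[Y=0]/8}$, and likewise for $n_1$. If $n\min_{y}\P[Y=y]$ is smaller than $\VC(\cH)+\log(1/\delta)$ then the claimed bound exceeds $1$ and is vacuous (balanced error lies in $[0,1]$), so we may assume it is large enough that this failure probability is $\le \delta/3$. Taking a union bound over the two uniform-convergence events and the count event, all hold simultaneously with probability $\ge 1-\delta$; substituting $n_y \ge \tfrac12 n\P[Y=y]$ into the two bounds and using $\tfrac12(a+b)\le\max(a,b)$ gives
\[
\sup_{h\in\cH} R(h) - R_n(h) \le \widetilde{\cO}\!\left(\sqrt{\tfrac{\VC(\cH)+\log(1/\delta)}{n\,\min_{y\in\{0,1\}}\P[Y=y]}}\right),
\]
after which the identity $\P[Y=y] = \P[Y\ne\abst]\,\P[Y=y\mid Y\ne\abst]$ rewrites the denominator in the stated form.

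The main obstacle is the first step: making rigorous that conditioning on the label vector turns $R_n$ into a genuine two-sample empirical average so that a standard VC bound applies verbatim, and that a bound proved for every fixed label realization lifts to an unconditional one (and similarly that the learned $\hat h$, which depends on all the data, is still covered because the bound is uniform over $\cH$). Everything afterward is routine — Chernoff concentration of the class counts plus a union bound — and the $\widetilde{\cO}$ absorbs the $\log n$ and $\log\VC(\cH)$ factors coming out of the VC inequality.
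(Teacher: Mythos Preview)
Your proposal is correct, and while it shares the paper's key device of conditioning on the label vector to tame the random denominators (both attribute this to \citet{woodworth2017learning}), the two arguments diverge in how uniformity over $\cH$ is obtained. The paper first establishes a \emph{pointwise} deviation bound for a fixed $h$: it conditions on the index set $\cI_y=\{i:Y_i=y\}$, applies a Chernoff bound to the count $n_y^S$ and Hoeffding to the conditional Binomial, and balances the two terms to get $\P[|R(h)-R_n(h)|>t]\le 6\exp(-ct^2 n\min_y\P[Y=y])$. Only then does it lift to a uniform statement via an explicit symmetrization lemma (ghost sample) and the Sauer--Shelah growth-function bound. You instead condition on the entire label vector \emph{first}, so that $R_n$ decomposes into two ordinary empirical averages on $n_0$ and $n_1$ i.i.d.\ samples, and then invoke a black-box VC uniform-convergence inequality on each class-conditional piece; the random counts are handled afterward by Chernoff and a union bound. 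Your route is more modular and shorter, since it outsources the symmetrization and covering to a cited result; the paper's route is self-contained and yields explicit constants. Both are sound, and the ``main obstacle'' you flag---that a conditional high-probability bound valid for every label realization integrates to an unconditional one---is handled correctly by your argument.
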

\begin{proof}
Let $S = \{(X_i, Y_i)\}_{i=1}^n$ refer to the empirical sample.
For convenience, for each $(\bar{y},y) \in \{0,1\} \times \{0,1\}$, define:
\[
\gamma_{\bar{y}y}(h) = \P[h(X) = \bar{y}| Y=y],
\]
and its empirical analogue:
\[
\gamma^S_{\bar{y}y}(h) = \frac{\sum_{i=1}^n\ind_{h(X_i) = \bar{y}}\ind_{Y_i = y}}{\sum_{i=1}^n\ind_{Y_i = y}}.
\]
Then $R(h) = \frac{1}{2}(\gamma_{01} + \gamma_{10})$ and $R_n(h) = \frac{1}{2}(\gamma^S_{01} + \gamma^S_{10})$.
For sample $S$, let \[\cI_y = \{i \in [n] : Y_i = y\}\] be the set of indices $i$ where $Y_i = y$, and set $n^S_y = |\cI_y|$.
Conditioned on $\cI_y$, the $\gamma^S$ variables are distributed as:
\[
\gamma^S_{\bar{y}y}(h)|\cI_y \sim \frac{1}{n^S_y}\mathrm{Binomial}(\gamma_{\bar{y}y}, n^S_y),
\]
since the randomness over $X$ in the sample gives $n^S_y$ independent trials to make $h(X_i)$ equal to $\bar{y}$.
We hide the argument $h$ below for convenience. Observe that $\condE{\gamma^S_{\bar{y}{y}}}{\cI_y} = \gamma_{\bar{y}{y}}$.
Then for every $\eta > 0$,
\begin{align*}
\P[|\gamma^S_{\bar{y}{y}} - \gamma_{\bar{y}y}| > t] &= \sum_{\cI_y}\condP{|\gamma^S_{\bar{y}{y}} - \gamma_{\bar{y}y}| > t}{\cI_y}\P[\cI_y]\\
&\le \P\left[n^S_y <(1-\eta)n\P[Y=y]\right] + \sum_{\cI_y : n^S_y \ge (1-\eta)n\P[Y=y]}\condP{|\gamma^S_{\bar{y}{y}} - \gamma_{\bar{y}y}| > t}{\cI_y}\P[\cI_y]\\
&\le \exp\left(-\frac{\eta^2n\P[Y=y]}{2}\right) + \sum_{\cI_y : n^S_y \ge (1-\eta)n\P[Y=y]}2\exp(-2n^S_yt^2)\P[\cI_y]\\
&\le \exp\left(-\frac{\eta^2n\P[Y=y]}{2}\right) + 2\exp(-2t^2(1-\eta)n\P[Y=y])\\
\end{align*}
The first inequality comes from simplifying the sum over all $2^n$ possible values of $\cI_y$.
The second comes from applying a Chernoff bound to $\mathrm{Binomial}(\P[Y=y], n)$ and Hoeffding's inequality to $\gamma_{\bar{y}{y}}$.
We can set $\eta$ to balance these terms:
\[
\frac{\eta^2}{2} = 2t^2(1-\eta),
\]
which yields:
\[
\eta = 2\left(\sqrt{t^4 + t^2} - t^2\right),
\]
since the other root is negative.
Substituting $\eta$ gives:
\[
\P[|\gamma^S_{\bar{y}{y}} - \gamma_{\bar{y}y}| > t] \le 3\exp\left(-2\left(\sqrt{t^4+t^2} - t^2\right)^2n\P[Y=y]\right)
\]
For $t\in (0,1)$, $\sqrt{t^4 + t^2} - t^2 \ge t/4$, so
\[
\P[|\gamma^S_{\bar{y}{y}} - \gamma_{\bar{y}y}| > t] \le 3\exp\left(-\frac{t^2}{8}n\P[Y=y]\right)
\]
Hence, for $t\in(0,1)$,
\begin{align*}
    \P[|R(h) - R_n(h)| > t] &\le \P[|\gamma_{01} - \gamma^S_{01}| + |\gamma_{10} - \gamma^S_{10}| > 2t]\\
    &\le \P[|\gamma_{01} - \gamma^S_{01}| > t] + \P[|\gamma_{10} - \gamma^S_{10}| > t]\\
                            &\le 6\exp\left(-\frac{t^2}{8}n\min_{y\in\{0,1\}}\P[Y=y]\right).\\
                            &= 6\exp\left(-\frac{t^2}{8}n\P[Y\ne\abst]\min_{y\in\{0,1\}}\P[Y=y|Y\ne\abst]\right).\\
\end{align*}

Now we show how to apply this deviation bound for $R$ in place of Hoeffding's inequality in the symmetrization argument from \citet{bousquet2003introduction}.

\begin{lemma}[Symmetrization]
\label{lem:symm}
Let $Z=(X,Y)$ and suppose we have a \emph{ghost sample} of $n$ additional points $Z'_i$ drawn i.i.d. from $P$.
Let $R'_n(h)$ denote the empirical balanced error of classifier $h$ on the ghost sample. Then for any $t>0$ such that $nt^2 \ge \frac{32\log 12}{\min_{y\in\{0,1\}} \P[Y=y]}$:
\[\P\left[\sup_{h\in \cH} R(h) - R_n(h) > t\right] \le 2\P\left[\sup_{h\in \cH} R'_n(h) - R_n(h) > t/2\right].\]
\end{lemma}
\begin{proof}[Proof of Lemma \ref{lem:symm}]
This follows \citet{bousquet2003introduction} exactly, except we replace the application of one inequality with the deviation bound derived above.
Let $h_n$ be the function achieving the supremum on the left-hand-side.
This depends on the sample $(Z_1,\ldots, Z_n)$.
\begin{align*}
\ind_{R(h_n) - R_n(h_n) >t}\ind_{R(h_n) - R'_n(h_n) <t/2} &= \ind_{R(h_n) - R_n(h_n) > t \wedge R'_n(h_n) - R(h_n) \ge -t/2}\\
&\le \ind_{R'_n(h_n) - R_n(h_n) > t/2}
\end{align*}
Taking the expectation over the second sample $(Z'_1,\ldots, Z'_n)$,
\[
\ind_{R(h_n) - R_n(h_n) >t}\P'[R(h_n) - R'_n(h_n) <t/2] \le \P'[R'_n(h_n) - R_n(h_n) > t/2]
\]
From the result above,
\begin{align*}
P'[R(h_n) - R'_n(h_n) \ge t/2] &\le 6\exp\left(-\frac{t^2}{32} n\min_{y\in\{0,1\}}\P[Y=y]\right)\\
&\le \frac{1}{2}
\end{align*}
by the condition on $nt^2$.
Hence
\[
\ind_{R(h_n) - R_n(h_n) >t} \le 2\P'[R'_n(h_n) - R_n(h_n) > t/2],
\]
and taking the expectation over the original sample $(Z_1,\ldots, Z_n)$ finishes the proof.
\end{proof}
Define $\cH_{Z_1,\ldots, Z_n} = \{(h(x_1), \ldots h(x_n)) : h\in \cH\}$.
Recall that the \emph{growth function} of class $\cH$ is defined as $\cS_{\cH}(n) = \sup_{(Z_1,\ldots, Z_n)}|\cH_{Z_1,\ldots, Z_n}|$.
Now to finish the proof of Theorem \ref{thm:balacc}, observe that the sup in the right-hand-side of the Lemma \ref{lem:symm} result only depends on the \emph{finite} set of vectors $\cH_{Z_1,\ldots,Z_n, Z'_1,\ldots, Z'_n}$
That is,
\begin{align*}
\P\left[\sup_{h\in \cH} R(h) - R_n(h) > t\right] &\le 2\P\left[\sup_{h\in \cH_{Z_1,\ldots,Z_n, Z'_1,\ldots, Z'_n}} R'_n(h) - R_n(h) > t/2\right]\\
&\le2\cS_{\cH}(2n)\max_{h\in\cH_{Z_1,\ldots,Z_n,Z'_1,\ldots,Z'_n}}\P[R'_n(h) - R_n(h) > t/2]\\
&\le4\cS_{\cH}(2n)\P[R(h) - R_n(h) > t/4]\\
&\le24\cS_{\cH}(2n)\exp\left(-\frac{t^2}{128}n\min_{y\in\{0,1\}}\P[Y=y]\right),
\end{align*}
where in the first line we applied the definition of the growth function and used the union bound, and in the last line we applied the concentration result for fixed $h$.
Recall that the Sauer-Shelah lemma \citep{vapnik1971uniform, sauer1972density, shelah1972combinatorial} implies that for any class $\cH$ with $\VC(\cH) = d$, $\cS_{\cH}(n) \le \left(\frac{en}{d}\right)^{d}$.
Then setting:
\[
t \ge 8\sqrt{2\frac{\VC(\cH)\log\frac{2en}{\VC(\cH)} + \log\frac{24}{\delta}}{n\min_{y\in\{0,1\}}\P[Y=y]}}
\]
completes the proof. Note that this choice of $t$ ensures that $nt^2 \ge \frac{32\log 12}{\min_{y\in\{0,1\}} \P[Y=y]}$ for any $\delta\in(0,1)$.
\end{proof}

\clearpage
\section{All Empirical Results}
\label{sec:all-empir-results}
\subsection{Dataset and hyperparameter details}
\label{sec:dataset-details}
\looseness=-1
Table \ref{table:dataset-details} is a reproduction of \citet{zhang2021wrench}'s Table 5 for the datasets used in this paper.
\citet{zhang2021wrench}'s Table 5 contains more statistics on the labeling functions, including average coverage and accuracy.

Table \ref{table:hyperparameter-details} shows the hyperparameter search spaces for the label models and end models.
We used the same search spaces and tuning procedure as \citet{zhang2021wrench} (see their Table 10), choosing the values that obtain the best mean performance on the gold-labeled validation set across five trial runs.
As discussed in Section \ref{sec:experiments}, we do \emph{not} re-tune these hyperparameters for $\beta < 1.0$; we used fixed values to show that simply tuning $\beta$ on its own can improve performance.

\begin{table*}[tb]
  \centering
  \caption{Details for the WRENCH datasets used in this work.}
  \label{table:dataset-details}
  \resizebox{\textwidth}{!}{
  \begin{tabular}{lllccccc}
    \toprule
    \textbf{Task} & \textbf{Domain} & \textbf{Dataset} & \textbf{Num. Labels} & \textbf{\# $\Lambda$'s} & \textbf{Train} & \textbf{Val} & \textbf{Test}\\
    \midrule
    \multirow{2}{*}{Sentiment} & Movie & IMDb & 2 & 5 & 20,000 & 2,500 & 2,500\\
                  & Review & Yelp & 2 & 8 & 30,400 & 3,800 & 3,800\\
    \addlinespace[1.5mm]
    Spam Classification & Comments & Youtube & 2 & 10 & 1,586 & 200 & 250\\
    \addlinespace[1.5mm]
    Question Classification & Web Query & TREC & 6 & 68 & 4,965 & 500 & 500\\
    \addlinespace[1.5mm]
    \multirow{3}{*}{Relation Classification} & Web Text & SemEval & 9 & 164 & 1,749 & 200 & 692\\
                  & Chemical & ChemProt & 10 & 26 & 12,861 & 1,607 & 1,607\\
                  & Biomedical & CDR & 2 & 33 & 8,430 & 920 & 4,673\\
    \addlinespace[1.5mm]
    Image Classification & Video Frames & Basketball & 2 & 4 & 17,970 & 1,064 & 1,222\\
    \addlinespace[1.5mm]
    Topic Classification & News & AGNews & 4 & 9 & 96,000 & 12,000 & 12,000\\
    \bottomrule
  \end{tabular}}
\end{table*}

\begin{table*}[tb]
  \centering
  \caption{Hyperparameter search spaces for label models and end models.}
  \label{table:hyperparameter-details}
  \begin{tabular}{llc}
    \toprule
    \textbf{Model} & \textbf{Parameters} & \textbf{Searched Values}\\
    \midrule
    \multirow{3}{*}{MeTaL} & learning rate & 1e-5, 1e-4, 1e-3, 1e-2, 1e-1\\
                   & weight decay & 1e-5, 1e-4, 1e-3, 1e-2, 1e-1\\
                   & training epochs & 5, 10, 50, 100, 200\\
    \midrule
    \multirow{3}{*}{Data Programming} & learning rate & 1e-5, 5e-5, 1e-4\\
                   & weight decay & 1e-5, 1e-4, 1e-3, 1e-2, 1e-1\\
                   & training epochs & 5, 10, 50, 100, 200\\
    \midrule
    \multirow{4}{*}{Logistic Regression} & learning rate & 1e-5, 1e-4, 1e-3, 1e-2, 1e-1\\
                   & weight decay & 1e-5, 1e-4, 1e-3, 1e-2, 1e-1\\
                   & batch size & 32, 128, 512\\
                   & training steps & 10000\\
    \midrule
    \multirow{6}{*}{MLP} & learning rate & 1e-5, 1e-4, 1e-3, 1e-2, 1e-1\\
                   & weight decay & 1e-5, 1e-4, 1e-3, 1e-2, 1e-1\\
                   & batch size & 32, 128, 512\\
                   & training steps & 10000\\
                   & hidden layers & 1\\
                   & hidden size & 100\\
    \midrule
    \multirow{4}{*}{BERT, RoBERTa} & learning rate & 2e-5,3e-5,5e-5\\
                   & weight decay & 1e-4\\
                   & batch size & 16, 32\\
                   & training steps & 10000\\
    \bottomrule
  \end{tabular}%
\end{table*}

\subsection{End model performance and $\beta$}
\label{sec:end-model-perf}
\begin{figure}[t]
  \centering
  \begin{subfigure}{0.5\textwidth}
  \includegraphics[width=\textwidth]{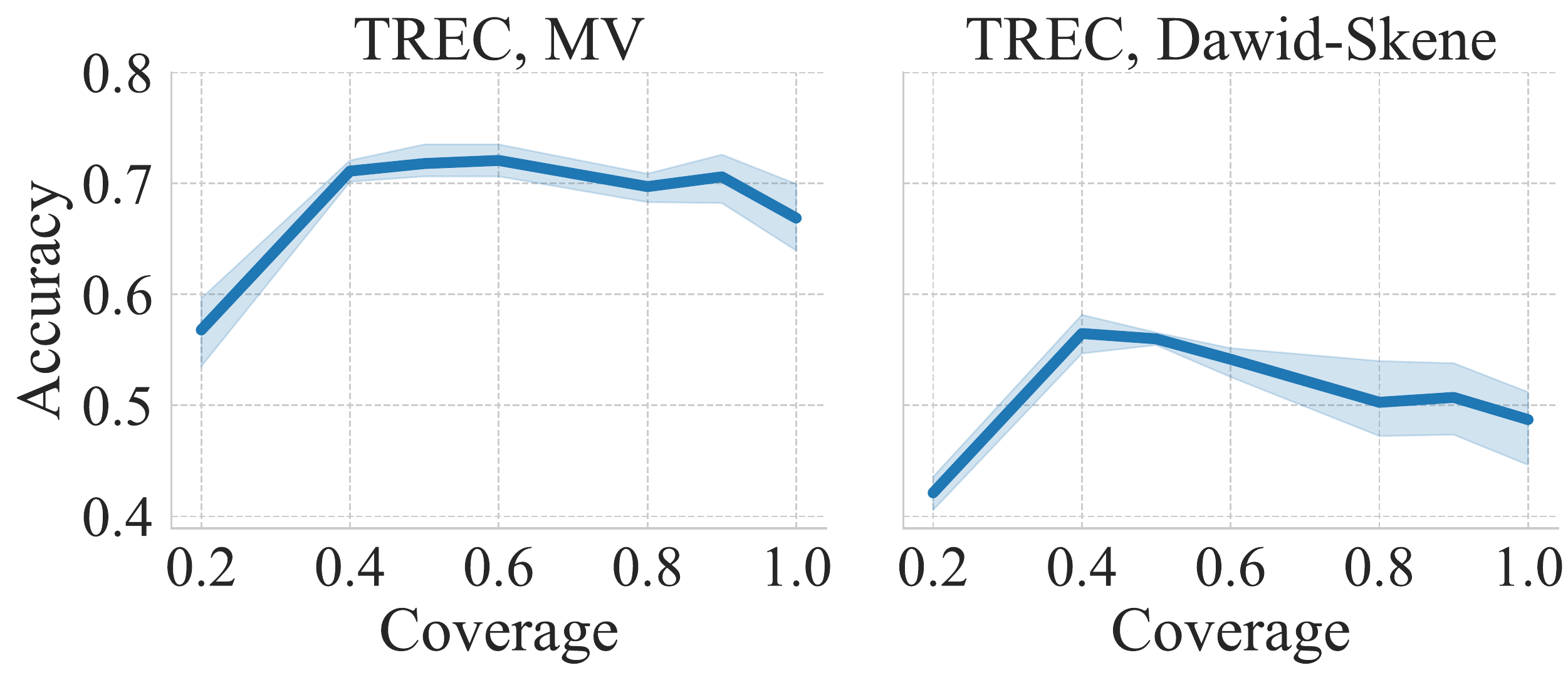}
\end{subfigure}%
  \begin{subfigure}{0.5\textwidth}
  \includegraphics[width=\textwidth]{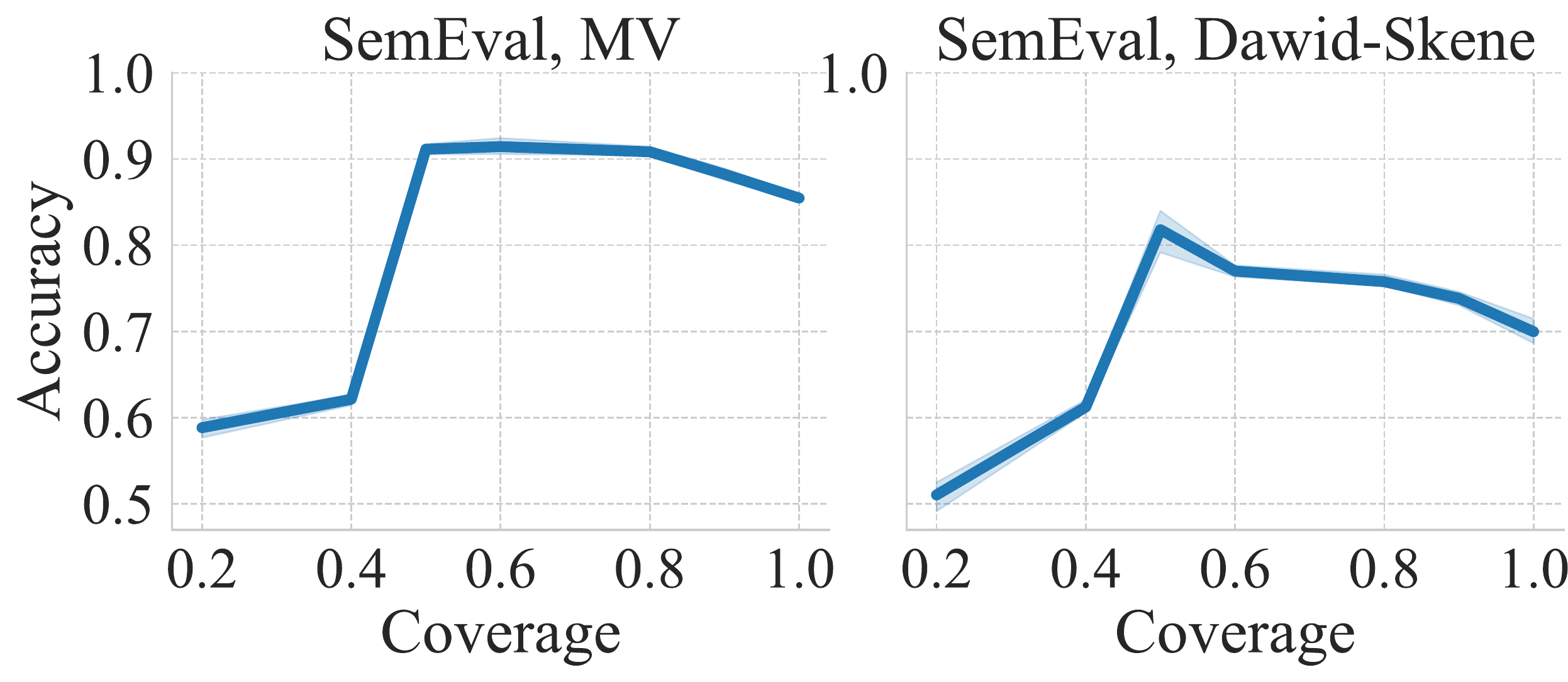}
\end{subfigure}
\caption{End model test accuracy versus coverage fraction $\beta$ for TREC and SemEval, Majority Vote and Dawid-Skene.
  Shaded regions show the standard deviation across five random initializations of the end model.
  At low coverage fractions, the end model overfits the training data, and label imbalance of the selected points is a concern.
  At high coverage fractions, the weak labels used for training are less accurate, causing worse (and for TREC, noisier) test peformance.}
\label{fig:end-model-perf}
\end{figure}
Figure \ref{fig:end-model-perf} shows how the end model test performance changes with $\beta$ for TREC and SemEval datasets and Majority Vote and Dawid-Skene label models.
At low coverage fractions, the end model performs worse than in the $\beta=1.0$ case because there is less training data and because the training subsets can be imbalanced (recall that we do not use stratified subset selection, and that the generalization bound in Theorem \ref{thm:main-theorem} depends on the \emph{minimum} coverage for each class).
At the intermediate coverage fractions, the end model performs \emph{better} than the $\beta=1.0$ case.
An interesting direction for future work is to determine methods for automatically selecting the best value of $\beta$.

\subsection{Subset selection versus relabeling}
\label{sec:apdx-relabeling}
\paragraph{Why not \emph{correct} pseudolabels with nearest neighbor?}
Consider an example $x_i$ whose weak label $\hat{Y}(x_i)$ disagrees with the
weak label $\hat{Y}(x_j)$ of most neighbors $j\in N(i)$.
This example would get thrown out by the cut statistic selection.
Instead of throwing such data points out, we could try to
\emph{re-label} them with the majority weak label from the neighbors.
However, throwing data out is a more conservative (and hence possibly more robust)
approach: Figure \ref{fig:relabeling-worse} shows a simple example where relabeling performs much worse than sub-selection.
For the representations studied in this work, relabeling largely fails to improve training set quality and end model performance.

If the weak labels are mostly inaccurate close to the true unknown decision
boundary (e.g., on hard examples), relabeling can actually make the training set
\emph{worse}.
This is also borne out on real empirical examples.
Figure \ref{fig:yelp-relabeling} shows the weak label accuracy on a relabeled
Yelp training set where the $\beta$ fraction of examples with the \emph{largest} cut
statistic score
score $Z_i$---examples with many more cut edges than expected---are relabeled
according to the majority vote of their neighbors.
Relabeling largely fails to improve over the quality of the $\beta=1.0$ full training set.
However, we note that instead of relabeling, \cite{chen2022shoring} obtained good results using nearest-neighbor to \emph{expand} the pseudolabeled training set by labeling some of the unlabeled examples $\{x_i : \mathbf{\Lambda}(x_i) = \abst\}$.
If most uncovered examples $\{x : \hat{Y}(x) = \abst\}$ are closer to \emph{correctly} pseudolabeled examples than incorrectly labeled ones,
this nearest-neighbor \emph{expansion} can improve performance.

\begin{figure}[h]
  \centering
  \begin{subfigure}{0.5\textwidth}
    \centering
    \includegraphics[width=\textwidth]{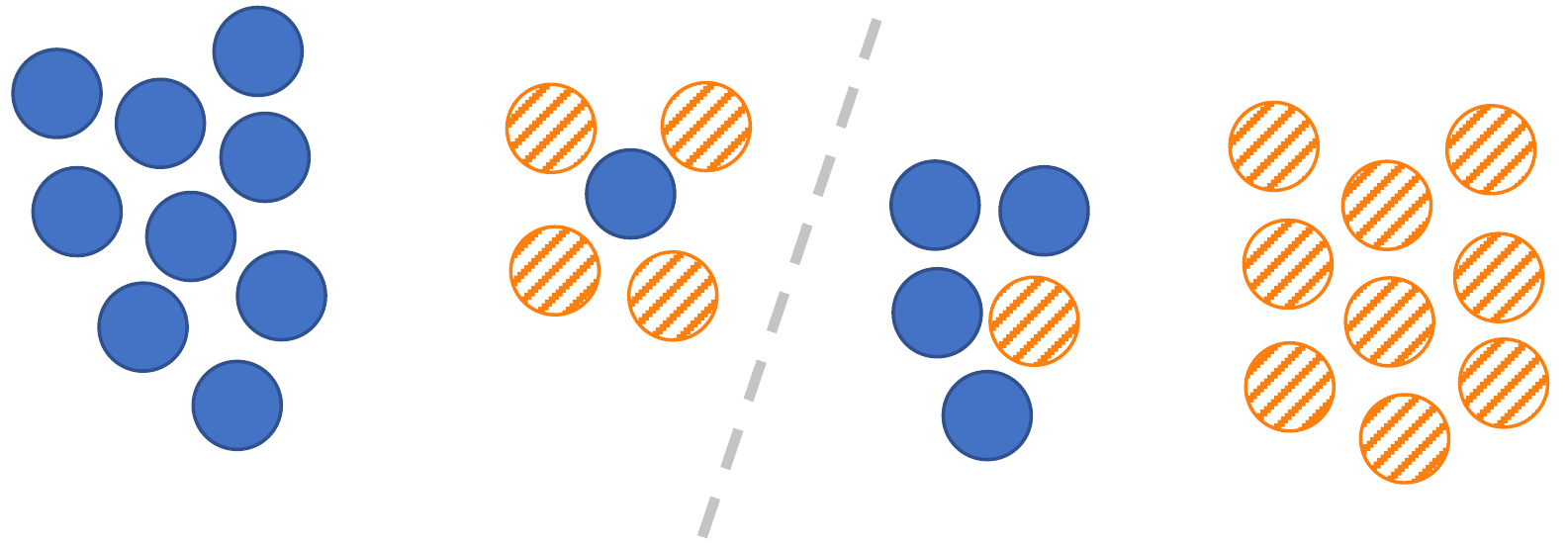}
    \subcaption{}
    \label{fig:relabeling-worse}
  \end{subfigure}\hfill
  \begin{subfigure}{0.4\textwidth}
    \centering
    \includegraphics[width=\textwidth]{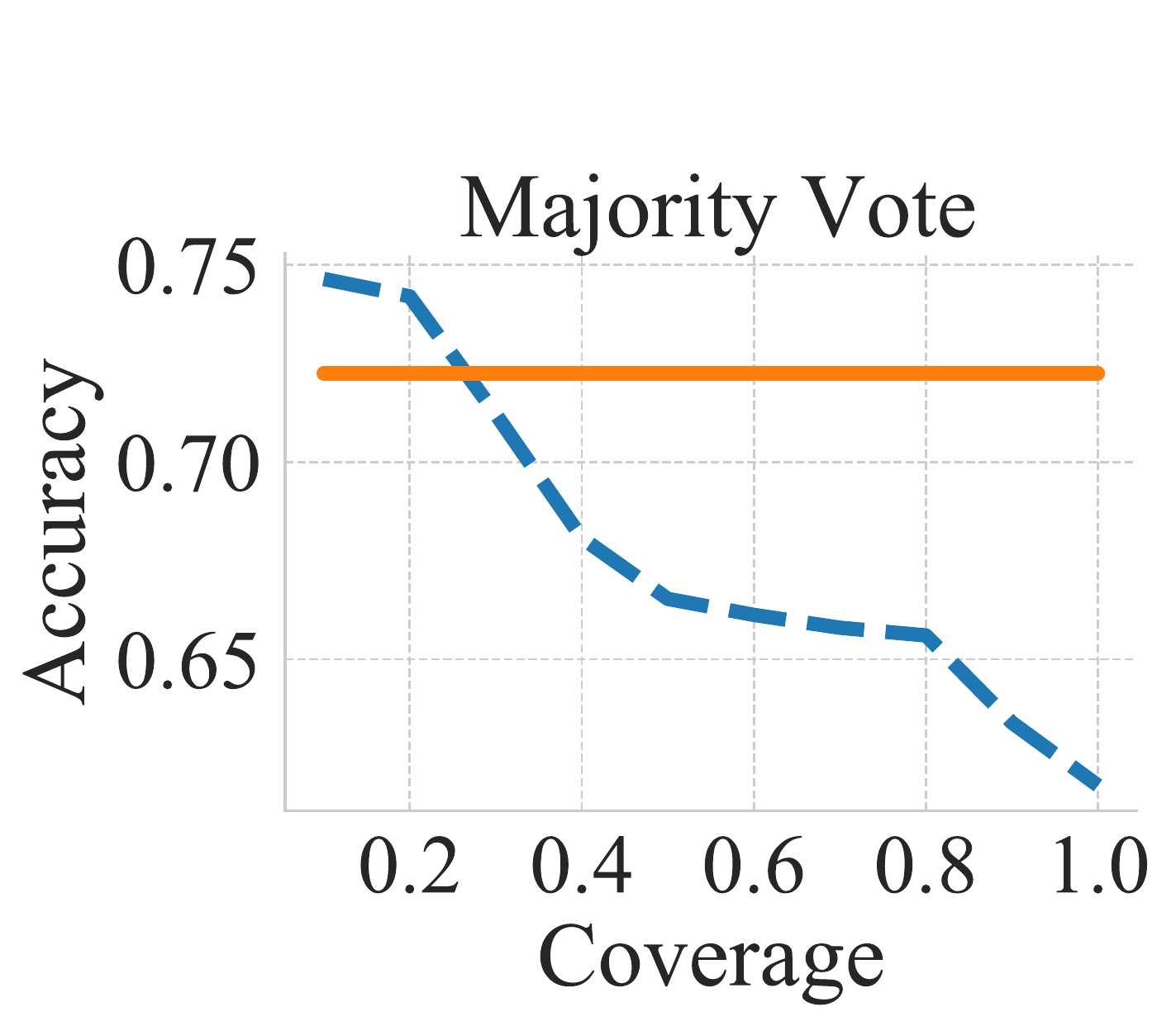}
    \subcaption{}
    \label{fig:yelp-relabeling}
  \end{subfigure}
  \caption{Left: example where subset selection performs better than re-labeling using
    $\phi$. In this example, the true labels $Y$ are recoverable by an
    unknown linear classifier (the dotted line). The weak labels (solid versus striped) are mostly
    incorrect close to this unknown decision boundary and always correct
    farther from the decision boundary. Relabeling the noisy points using
    nearest-neighbor (e.g., 4-nearest neighbor) actually makes the weak label accuracy \emph{worse},
    whereas selecting based on the cut statistic yields a subset of examples
    with 100\% accuracy. Right: relabeling performance on Yelp. Points $(\beta, Y)$ show the accuracy
  of the pseudo-labeled training set obtained by relabeling the noisiest $\beta$
  fraction of points (ranked by the cut statistic $Z_i$) with the majority vote
  of their neighbors in $\phi$ (dotted blue), compared to the accuracy when $\beta=1$ (solid orange).
  Relabeling largely fails to improve accuracy over the $\beta=1.0$ case.}
\end{figure}

\subsection{Additional selection accuracy plots}
\label{sec:subset-acc-plots}
Figure \ref{fig:apdx-allaccs} contain analogous plots to Figure \ref{fig:yelp-balaccs} for every dataset in Table \ref{tbl:allresults-wrench}.
These figures compare the quality of the training subsets selected by the cut statistic and entropy scoring.

\begin{figure}[h]
  \centering
  \begin{subfigure}{0.9\textwidth}
  \includegraphics[width=\textwidth]{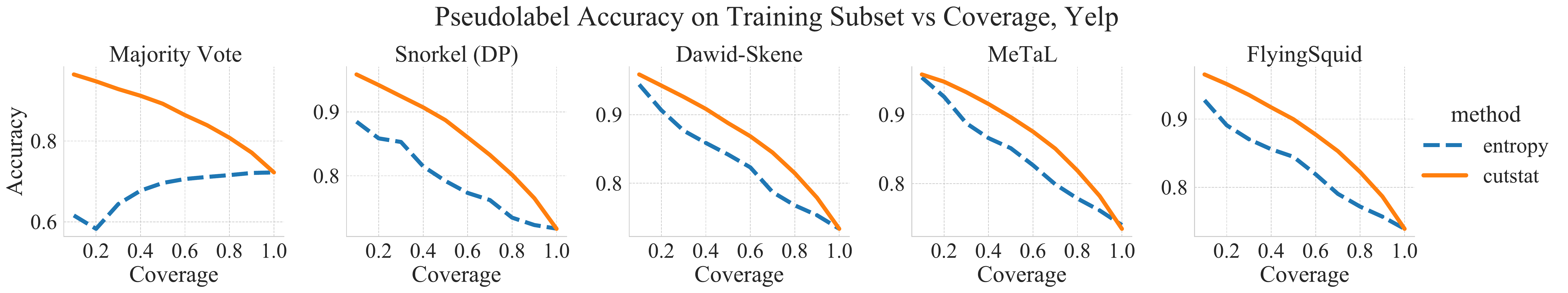}
\end{subfigure}\\
  \begin{subfigure}{0.9\textwidth}
  \includegraphics[width=\textwidth]{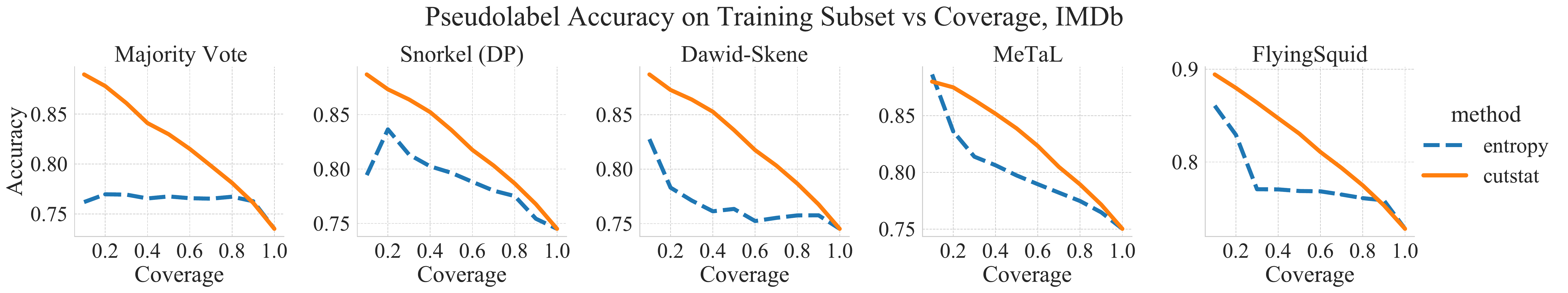}
\end{subfigure}\\
  \begin{subfigure}{0.9\textwidth}
  \includegraphics[width=\textwidth]{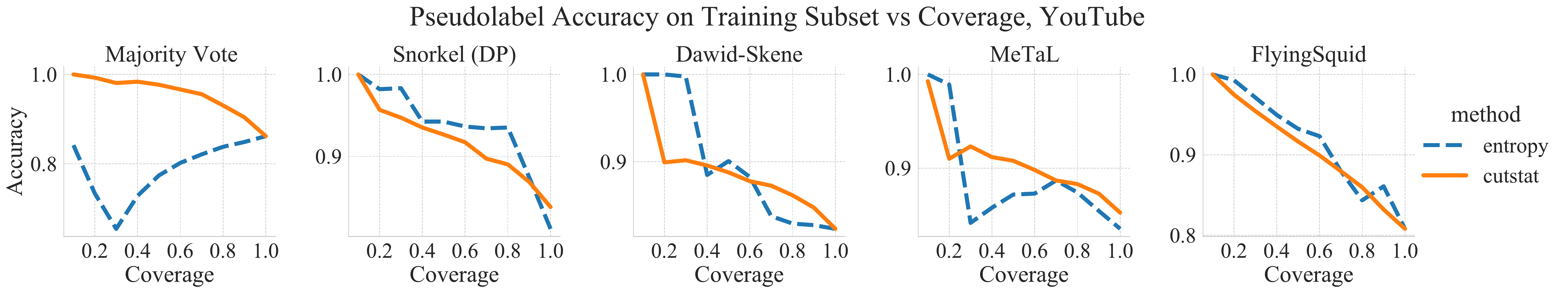}
\end{subfigure}\\
  \begin{subfigure}{0.9\textwidth}
  \includegraphics[width=\textwidth]{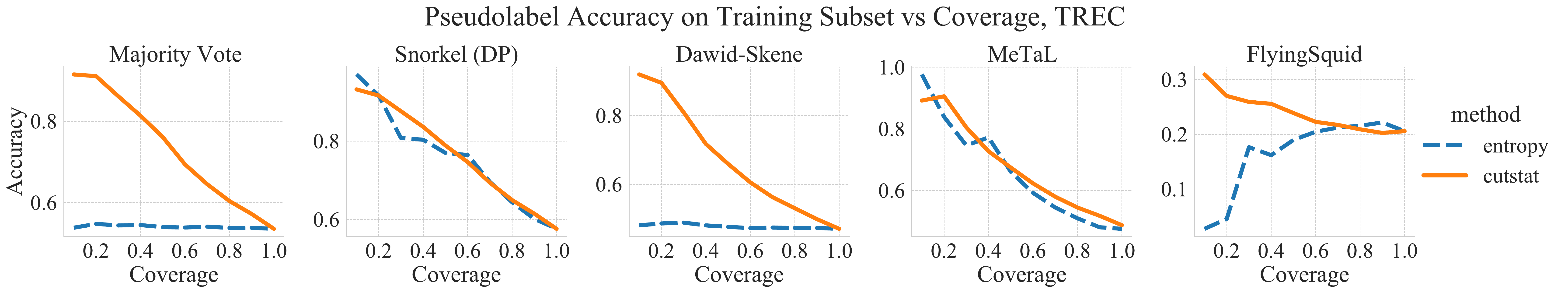}
\end{subfigure}\\
  \begin{subfigure}{0.9\textwidth}
  \includegraphics[width=\textwidth]{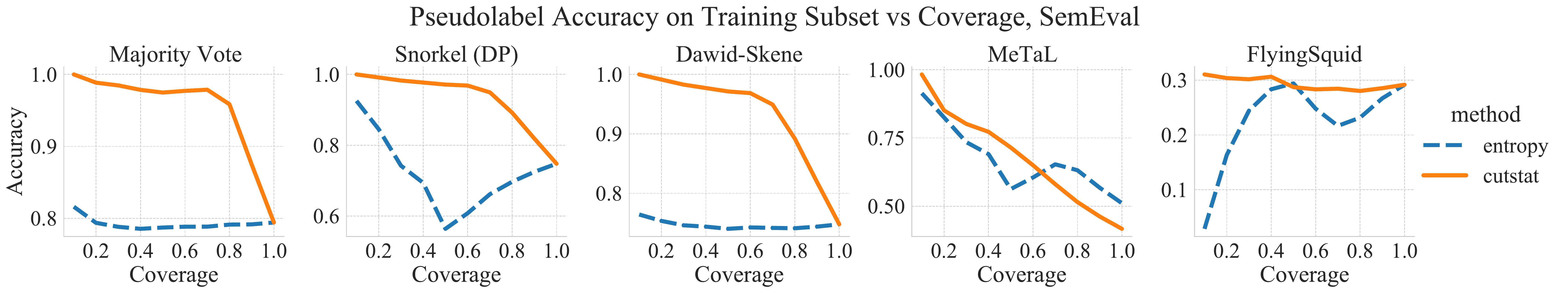}
\end{subfigure}\\
  \begin{subfigure}{0.9\textwidth}
  \includegraphics[width=\textwidth]{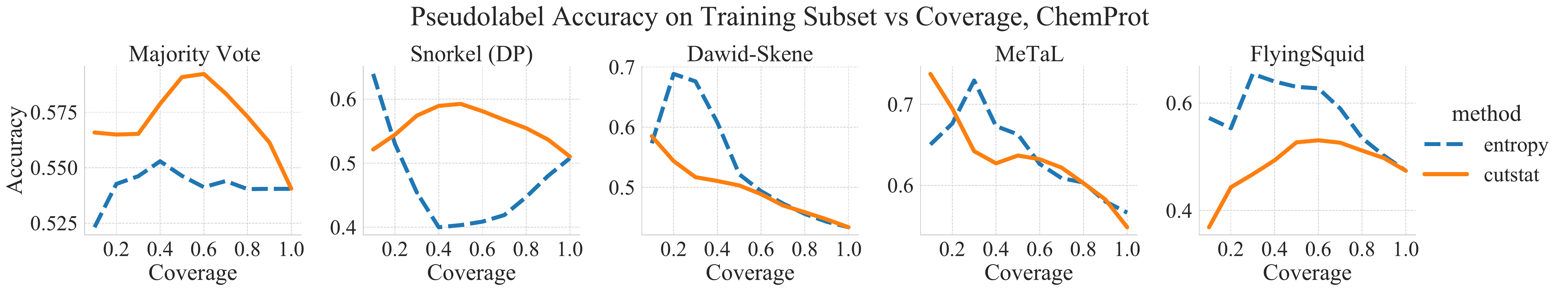}
\end{subfigure}\\
  \begin{subfigure}{0.9\textwidth}
  \includegraphics[width=\textwidth]{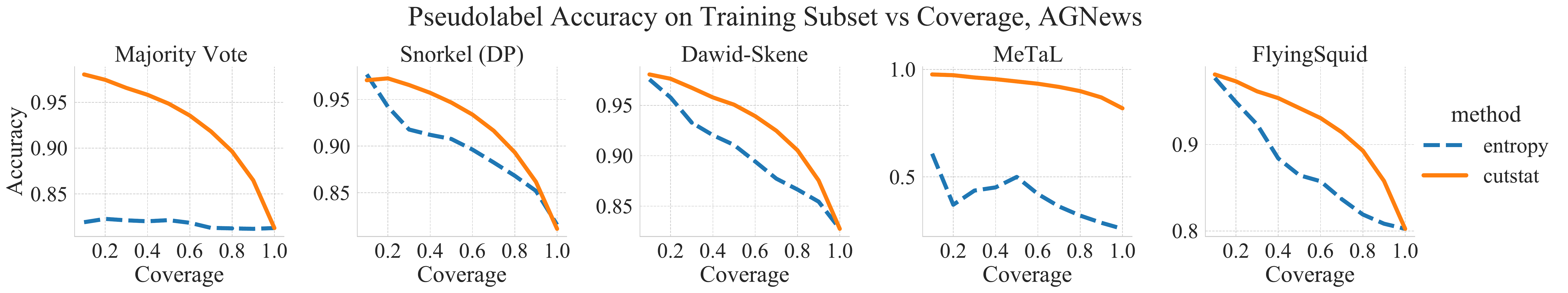}
\end{subfigure}\\
  \caption{Accuracy of the pseudolabeled training set versus the selection fraction
    $\beta$ for five different label models and seven datasets.
    A pretrained BERT model is used as $\phi$ for the cut
    statistic.
  The accuracy of the weak training labels is better for $\beta < 1$, indicating that sub-selection can select higher-quality
training sets. The two curves should always agree at $\beta=1.0$, but don't always do so for MeTaL due to noise in the MeTaL training procedure.}
\label{fig:apdx-allaccs}
\end{figure}

\subsection{Ablation for number of cut statistic neighbors}
\begin{wrapfigure}[7]{r}{0.3\textwidth}
  \centering
  \vspace{-2em}
  \begin{tabular}{lc}
    \toprule
    \textbf{K value} & \textbf{Test accuracy}\\
    \midrule
    5  & 76.92 (2.57)\\
    10 & 73.72 (2.59)\\
    20 & 72.92 (1.31)\\
    40 & 73.12 (2.39)\\
    80 & 72.16 (1.89)\\
    \bottomrule
  \end{tabular}
\end{wrapfigure}

This table shows how the results change when varying the number of nearest-neighbors $K$ used in the cut statistic, using majority vote and training a RoBERTa end model on TREC.
The performance gain over $\beta=1.0$ (which obtains 66.28\% mean accuracy) is not sensitive to the choice of $K$.
As in all of our results, we re-used hyperparameters from the $\beta=1.0$ case and chose the best value of $\beta=1.0$ according to gold-labeled validation performance.
The best value for $\beta$ was stable across all choices of $K$: $\beta=0.4$ had the optimal validation performance in all of these experiments.
As indicated in the table, better results may be obtained by tuning over $K$, but our results showed the same value of $K$ obtains good performance across a wide variety of datasets and end models.

\subsection{Original WRENCH results}
Our $\beta=1.0$ results closely matched the $\beta=1.0$ results from \citet{zhang2021wrench}, but not in every case, despite using the same hyperparameter search space and tuning scheme for both the label model and the end model.
Table \ref{tbl:roberta-wrench-reported} shows our results for RoBERTa and $\beta=1.0$ in line with the same results reported in \citet{zhang2021wrench}.
Table \ref{tbl:apdx-cosine-results} compares the performance of our method (i.e., selection with the cut statistic) against the performance of COSINE \cite{yu2021fine}, which performs multiple rounds of self-training on the unlabeled data.

\begin{table}[t!]
  \caption{RoBERTa results using $\beta=1.0$ (no sub-selection), reported from \citet{zhang2021wrench}, versus our reproduction using $\beta=1.0$.}
  \label{tbl:roberta-wrench-reported}
  \resizebox{\textwidth}{!}{
    \begin{tabular}{lccccccccc}
      \toprule
      & {\bf imdb} & {\bf yelp} & {\bf youtube} & {\bf trec} & {\bf semeval} & {\bf chemprot} & {\bf agnews}\\
      \midrule
      MV \cite{zhang2021wrench}   & 85.76 (0.70) & 89.91 (1.76) & 96.56 (0.86) & 66.28 (1.21) & 84.00 (0.84) & 56.85 (1.91) & 86.88 (0.98)\\
      MV (ours) & 86.99 (0.55) & 88.51 (3.25) & 95.84 (1.18) & 67.60 (2.38) & 85.83 (1.22) & 57.06 (1.12) & 87.46 (0.53)\\
      \addlinespace[1.5mm]
      DP \cite{zhang2021wrench}   & 86.26 (1.02) & 89.59 (2.87) & 95.60 (0.80) & 72.12 (4.58) & 70.57 (0.83) & 39.91 (9.33) & 86.81 (0.42)\\
      DP (ours) & 86.31 (1.53) & 88.73 (5.07) & 94.08 (1.48) & 71.40 (3.30) & 71.07 (1.66) & 52.52 (0.69) & 86.75 (0.24)\\
      \addlinespace[1.5mm]
      DS  \cite{zhang2021wrench}  & 84.74 (1.41) & 92.30 (1.75) & 93.52 (1.39) & 48.32 (1.50) & 69.67 (1.18) & 45.69 (0.86) & 87.16 (0.58)\\
      DS (ours) & 85.50 (1.68) & 92.42 (1.41) & 92.48 (1.44) & 51.24 (3.50) & 70.83 (0.75) & 45.61 (2.60) & 87.29 (0.40)\\
      \addlinespace[1.5mm]
      FS \cite{zhang2021wrench}   & 86.95 (0.58) & 92.08 (2.63) & 93.84 (1.57) & 30.44 (3.48) & 31.83 (0.00) & 39.95 (6.50) & 86.69 (0.29)\\
      FS (ours) & 85.25 (1.96) & 92.14 (2.76) & 93.52 (2.11) & 35.40 (1.32) & 31.83 (0.00) &  47.23 (1.04) & 86.56 (0.55)\\
      \addlinespace[1.5mm]
      MeTaL \cite{zhang2021wrench} & 84.98 (1.07) & 89.08 (3.71) & 94.56 (0.65) & 60.04 (1.18) & 70.73 (0.68) & 54.59 (0.77) & 87.18 (0.45)\\
      MeTaL (ours) & 86.16 (1.13) & 88.41 (3.25) & 92.40 (1.19) & 55.44 (1.08) & 59.53 (1.87) & 56.74 (0.58) & 86.74 (0.60)\\
      \bottomrule
    \end{tabular}}
\end{table}

\begin{table}[tb!]
  \caption{Cut statistic versus the COSINE method \cite{yu2021fine}. We report the tuned COSINE performance from \cite{zhang2021wrench}. COSINE performs multiple rounds of self-training on the unlabeled data, whereas the cut statistic method performs one round of training on a carefully chosen subset of the weakly-labeled data. Surprisingly, the cut statistic is sometimes competitive with COSINE despite not using any of the unlabeled data and only requiring one round of training. We show in Section \ref{apdx:cutstat-self-train} how to combine two appraoches.}
  \label{tbl:apdx-cosine-results}
  \resizebox{\textwidth}{!}{
    \begin{tabular}{llccccccccc}
      \toprule
      End Model &  Method & {\bf imdb} & {\bf yelp} & {\bf youtube} & {\bf trec} & {\bf semeval} & {\bf chemprot} & {\bf agnews}\\
      \midrule
      \multirow{11}{*}{BERT} & MV + COSINE & 82.98 (0.05) & 89.22 (0.05) & 98.00 (0.00) & 76.56 (0.08) & 86.80 (0.46) & 58.47 (0.08) & 87.03 (0.00)\\
      &MV + cutstat & 81.86 (1.36) &89.49 (0.78) & 95.60 (0.72) &71.84 (3.00) &92.47 (0.49) & 57.47 (1.00) & 86.26 (0.43)\\
      \addlinespace[1.5mm]
      & DP + COSINE & 84.58 (0.08) & 88.44 (0.03) & 96.32 (0.16) & 78.72 (0.43) & 75.77 (1.33) & 57.51 (0.02) & 86.98 (0.39)\\
      & DP + cutstat & 79.07 (2.52) & 88.13 (1.46) & 93.92 (0.93) & 76.76 (1.92) & 91.07 (0.90) & 55.10 (1.49) & 85.89 (0.45)\\
      \addlinespace[1.5mm]
      & DS + COSINE & 91.54 (0.54) & 90.84 (0.30) & 94.16 (0.20) & 53.36 (0.29) & 72.50 (0.00) & 49.65 (0.68) & 87.19 (0.00)\\
      & DS + cutstat & 80.22 (1.69) & 89.04 (1.10) & 90.72 (1.27) & 57.28 (2.91) & 89.07 (1.62) & 49.07 (1.48) & 86.93 (0.22)\\
      \addlinespace[1.5mm]
      & FS + COSINE & 84.40 (0.00) & 89.05 (0.07) & 94.80 (0.00) & 27.60 (0.00) & 31.83 (0.00) & 48.10 (0.60) & 87.16 (0.16)\\
      & FS + cutstat & 80.85 (1.50) & 88.75 (1.13) & 91.04 (1.23) & 33.84 (3.17) & 31.83 (0.00) & 48.65 (0.99) & 85.90 (0.39)\\
      \addlinespace[1.5mm]
      & MeTaL + COSINE &  83.47 (0.12) & 89.76 (0.00) & 94.88 (0.53) & 61.80 (0.00) & 79.20 (2.33) & 55.46 (0.12) & 87.26 (0.02)\\
      & MeTaL + cutstat & 81.49 (1.51) & 88.41 (1.19) & 92.64 (0.41) & 63.80 (2.28) & 65.23 (0.91) & 58.33 (0.81) & 86.16 (0.48)\\
      \midrule
      \multirow{11}{*}{RoBERTa}
                & MV + COSINE & 88.22 (0.22) & 94.23 (0.20) & 97.60 (0.00) & 77.96 (0.34) & 86.20 (0.07) & 59.43 (0.00) & 88.15 (0.30)\\
                &  MV + \emph{cutstat} & 86.69 (0.75) & 95.19 (0.23) & 96.00 (1.10) & 72.92 (1.31) & 92.07 (0.80) & 59.05 (0.56) & 88.01 (0.47)\\
      \addlinespace[1.5mm]
                & DP + COSINE & 87.91 (0.15) & 94.09 (0.06) & 96.80 (0.00) & 82.36 (0.08) & 75.17 (0.95) & 52.86 (0.06) & 87.53 (0.03)\\
                &  DP + \emph{cutstat} & 86.46 (1.82) & 93.95 (0.93) & 93.04 (1.30) & 76.84 (4.09) & 86.07 (1.82) & 56.43 (1.37) & 87.76 (0.17)\\
      \addlinespace[1.5mm]
                & DS + COSINE & 88.01 (0.56) & 94.19 (0.18) & 96.24 (0.41) & 59.40 (0.42) & 71.70 (0.07) & 46.75 (0.27) & 88.20 (0.11)\\
                &  DS  + \emph{cutstat} & 86.14 (0.60) & 93.81 (0.69) & 93.84 (0.70) & 58.48 (2.75) & 81.67 (1.33) & 52.93 (1.67) & 88.35 (0.22)\\
      \addlinespace[1.5mm]
                & FS + COSINE & 88.48 (0.00) & 95.33 (0.06) & 96.80 (0.00) & 33.80 (0.00) & 31.83 (0.00) & 39.89 (0.00) & 87.23 (0.00)\\
                &  FS  + \emph{cutstat} & 87.71 (0.76) & 94.50 (0.74) & 95.84 (0.54) & 38.16 (0.43) & 31.83 (0.00) & 50.55 (1.05) & 87.49 (0.13)\\
      \addlinespace[1.5mm]
                & MeTaL + COSINE & 86.46 (0.11) & 93.11 (0.01) & 97.04 (0.20) & 71.64 (0.59) & 70.90 (0.08) & 53.32 (0.19) & 87.85 (0.02)\\
                &  MeTaL + \emph{cutstat} & 87.46 (0.65) & 94.03 (0.53) & 93.84 (1.38) & 69.72 (2.39) & 66.70 (0.90) & 57.40 (0.98) & 88.40 (0.38)\\
      \bottomrule
    \end{tabular}}
\end{table}

\subsection{Combining the Cut Statistic with Weakly-Supervised Self-Training Methods}
\label{apdx:cutstat-self-train}
\paragraph{COSINE.}
COSINE \cite{yu2021fine} combines an initial set of pseudolabeled data with a self-training procedure to make better use of the \emph{unlabeled} data that is not covered by weak rules.
In each round of self-training, a subset of the data is chosen to use as the training set for the next round by using the confidence score of the trained end model.
Instead of using the confidence score, we can instead use the cut statistic to select the training data for each round.
This is analogous to switching from the standard self-training algorithm to SETRED \cite{li2005setred}, which uses the cut statistic to select data for each self-training round.
Intuitively, replacing the poorly-calibrated confidence score with the cut statistic in each round should lead to higher quality training data and increased performance.
Our previous experiments show this is true for the \emph{first} round.

\paragraph{ASTRA.}
ASTRA \cite{karamanolakis2021self} is a semi-weakly supervised learning method that uses weakly-labeled data plus a small set of labeled data and a large set of unlabeled data.
There are two networks: the \emph{teacher model}, also called the Rule Attention Network (RAN), and the \emph{student model}, which is analogous to our \emph{end model} (BERT, RoBERTa, etc.).
Training proceeds in rounds.
In the first step, the student model is fine-tuned on the small labeled dataset and used to pseudolabel the unlabeled dataset.
Next, the teacher is trained on the weakly-labeled training data plus pseudolabeled data from the gold-fine-tuned student.
The teacher then pseudolabels the unlabeled data using a learned instance-specific weighting procedure, so that high-quality examples are upweighted.
Finally, the student model is trained on this data.

We can insert the cut statistic in multiple parts of this procedure.
First, in each round, ASTRA trains the teacher model on \emph{all} of the pseudolabeled data from the gold-fine-tuned student.
Instead, we can use the cut statistic to select a high-quality subset of this data for the teacher model to train on.
Second, the student model is trained on a subset of the data selected by the teacher model; we can further filter this subset with the cut statistic.
Applying the cut statistic in this step is somewhat less necessary, since ASTRA already has a (soft) instance-specific selection procedure built-in.

Table \ref{tbl:apdx-astra-results} shows plain ASTRA versus ASTRA + cutstat on SemEval and TREC using a RoBERTa-base end model.
Standard deviations are reported across five random seeds for choosing the labeled subset.
Following the best constant $\beta$ from Section \ref{sec:experiments}, we set $\beta=0.6$ for the first round of training, then increase by $0.1$ in each round to use more of the unlabeled data each time.
So $\beta$ for the $t$-th round of ASTRA is $\min(1, 0.6 + 0.1t)$, $t\in\{0,\ldots,24\}$. Following \citet{karamanolakis2021self}, we train ASTRA for up to 25 iterations using a patience of 3 iterations. In each step, the model checkpoint with best validation performance is kept.
We did not perform hyperparameter tuning on the end model parameters and used a fixed learning rate of 2e-5 and batch size 128.
The cut statistic improves the ASTRA performance for nearly every labeled data size despite us not tuning $\beta$ on the validation set.
Tuning $\beta$ on the validation set, as in Table \ref{tbl:allresults-wrench}, would likely result in even better performance gains.

\begin{table}[tb!]
  \centering
  \caption{Combining the cut statistic with ASTRA \cite{karamanolakis2021self} boosts performance by selecting a higher-quality set of training data for the teacher model in each round. These results use fixed end-model hyperparameters and a fixed choice for the cut statistic fraction $\beta$ in each round.}
  \vspace{1mm}
  \label{tbl:apdx-astra-results}
    \begin{tabular}{lccc}
      \toprule
      Method & $|$Labeled set$|$ & {\bf trec} & {\bf semeval}\\
      \midrule
      ASTRA                 & 10 & 65.60 (5.19) & 82.70 (3.04)\\
      \;\; \emph{+ cutstat} & 10 & 67.40 (5.78) & 91.10 (0.92)\\
      ASTRA                 & 20 & 74.40 (3.35) & 86.53 (1.17)\\
      \;\; \emph{+ cutstat} & 20 & 75.04 (1.63) & 90.27 (2.09)\\
      ASTRA                 & 40 & 85.72 (1.32) & 87.60 (1.22)\\
      \;\; \emph{+ cutstat} & 40 & 84.52 (3.17) & 91.20 (1.11)\\
      \bottomrule
    \end{tabular}
\end{table}

\begin{figure}[t!]
  \centering
  \begin{subfigure}{0.2\textwidth}
  \includegraphics[width=\textwidth]{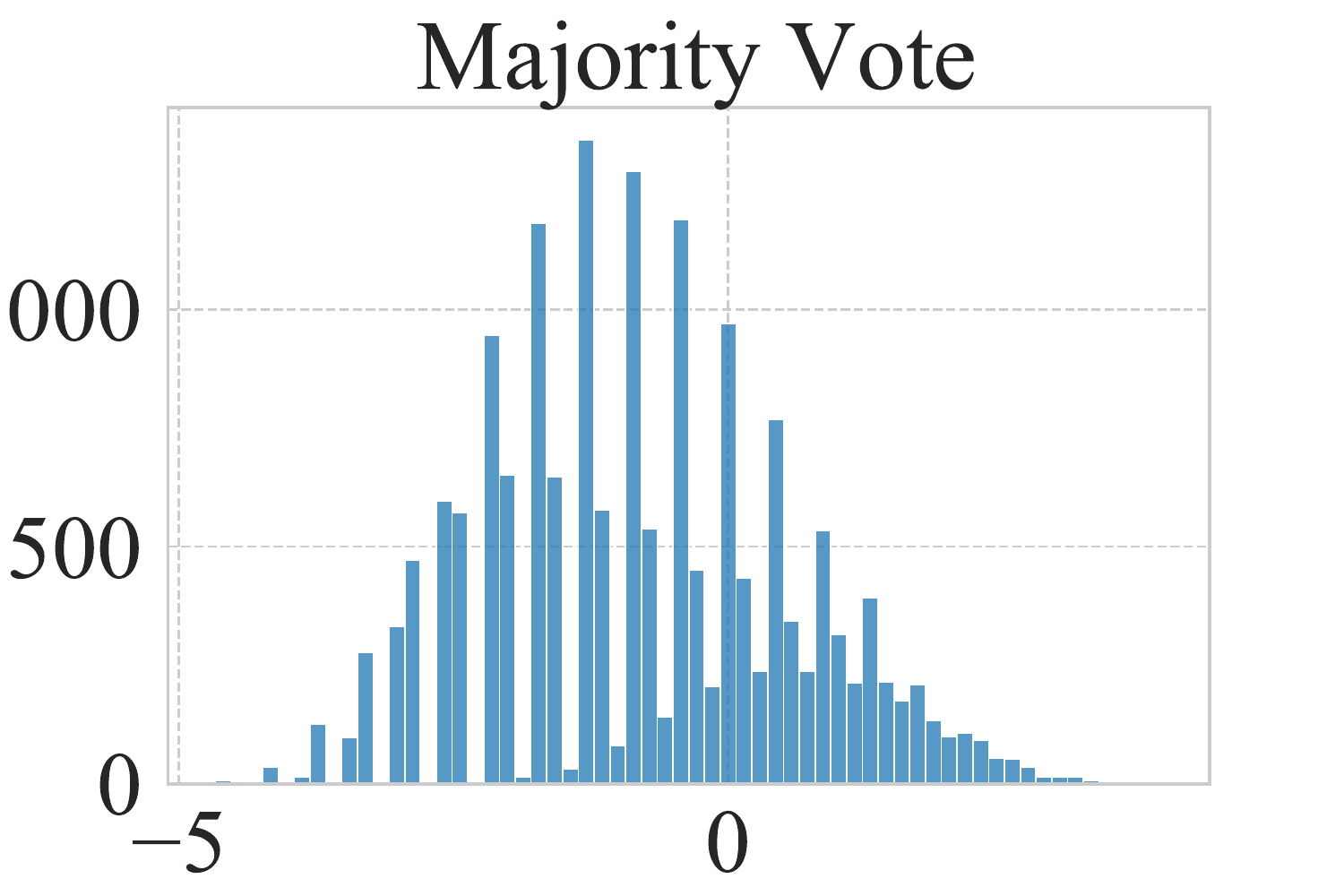}
\end{subfigure}%
  \begin{subfigure}{0.2\textwidth}
  \includegraphics[width=\textwidth]{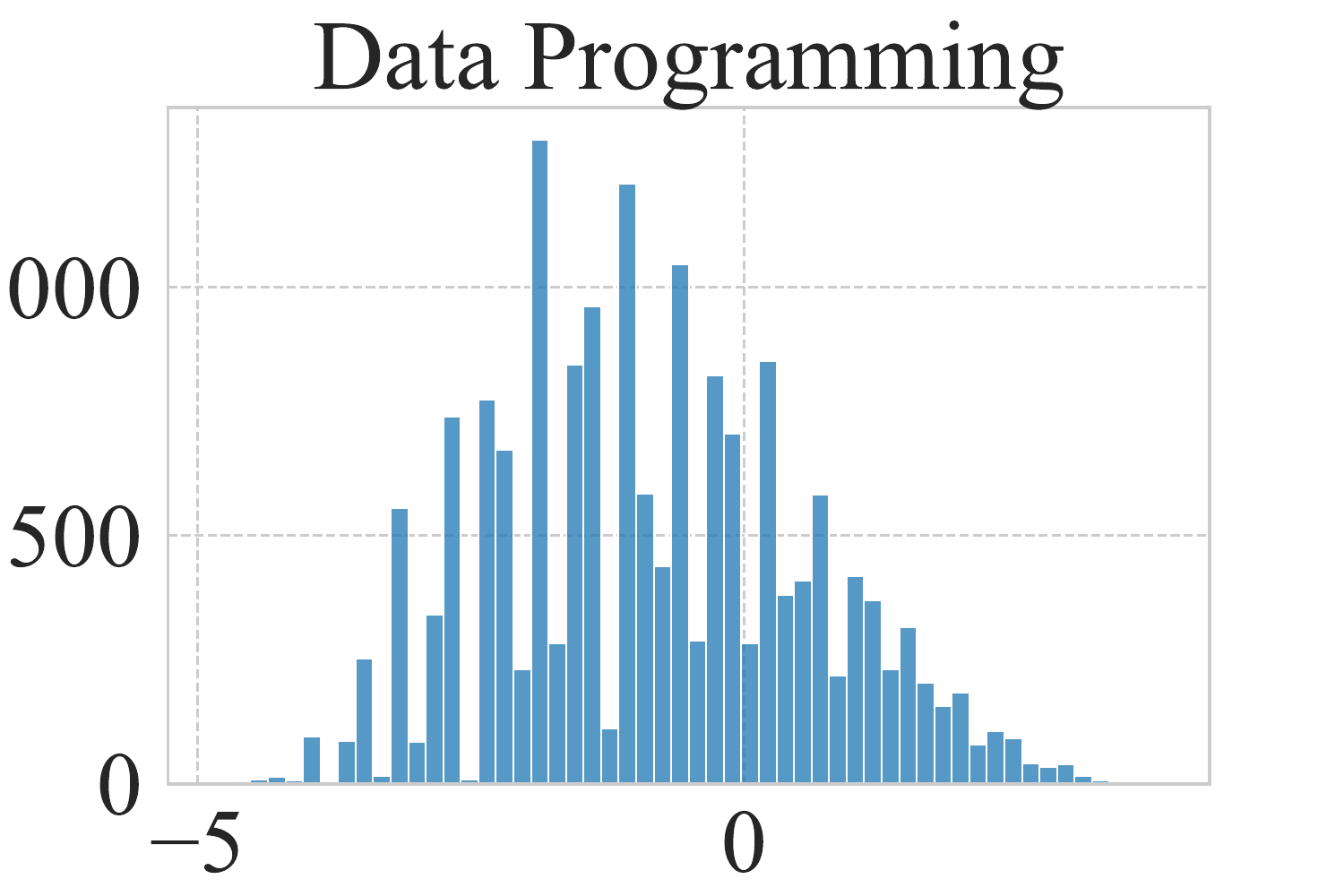}
\end{subfigure}%
  \begin{subfigure}{0.2\textwidth}
  \includegraphics[width=\textwidth]{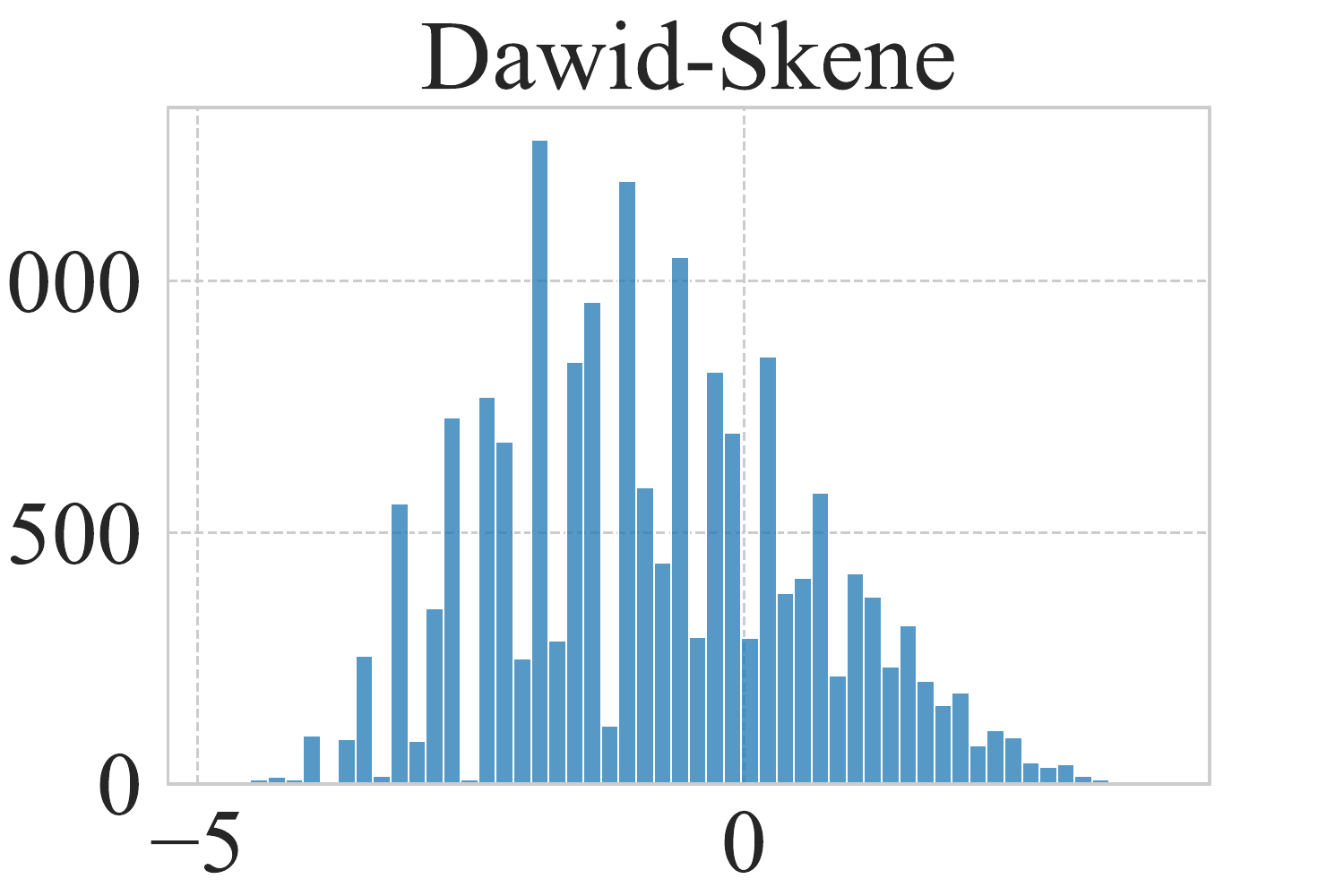}
\end{subfigure}%
  \begin{subfigure}{0.2\textwidth}
  \includegraphics[width=\textwidth]{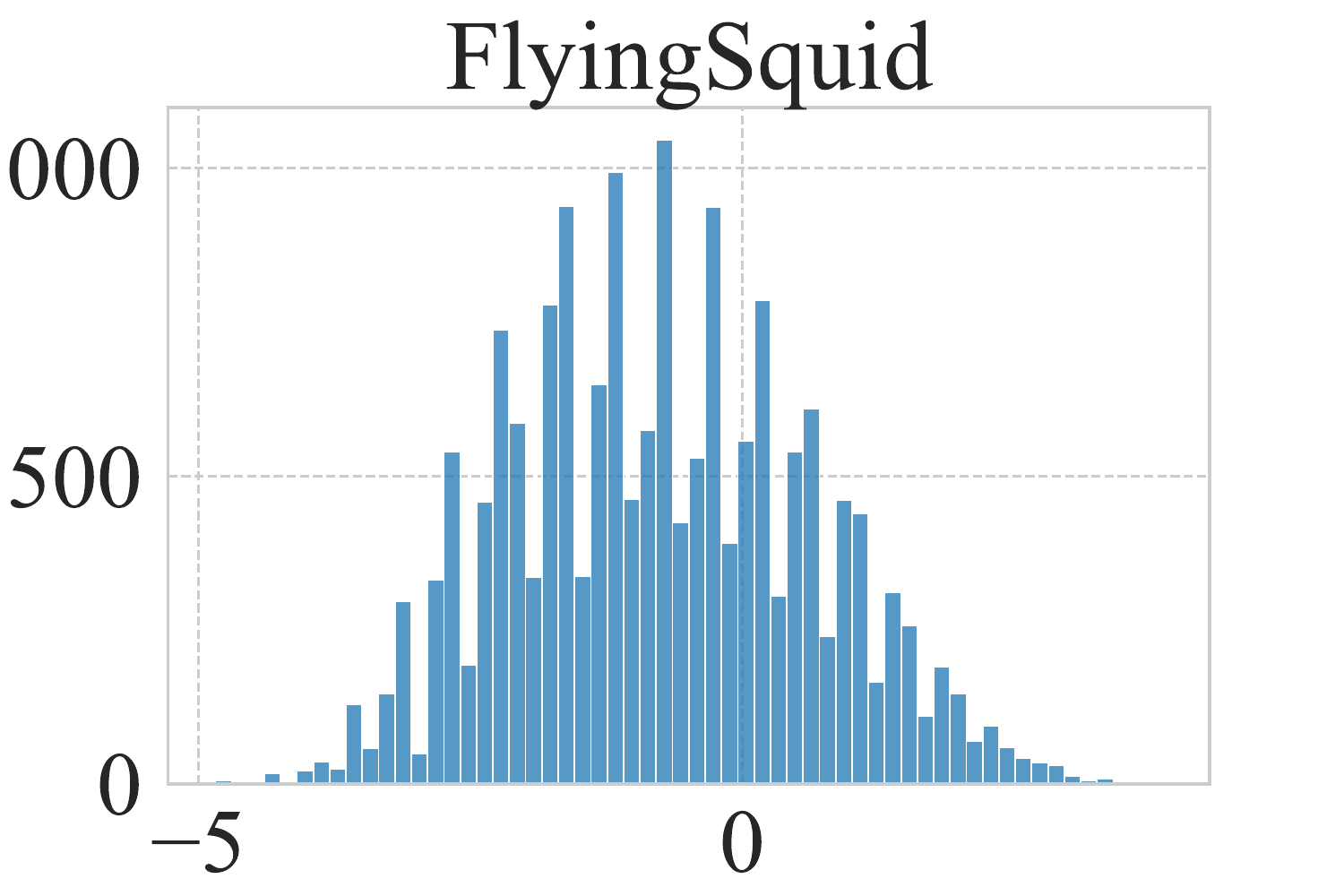}
\end{subfigure}%
  \begin{subfigure}{0.2\textwidth}
  \includegraphics[width=\textwidth]{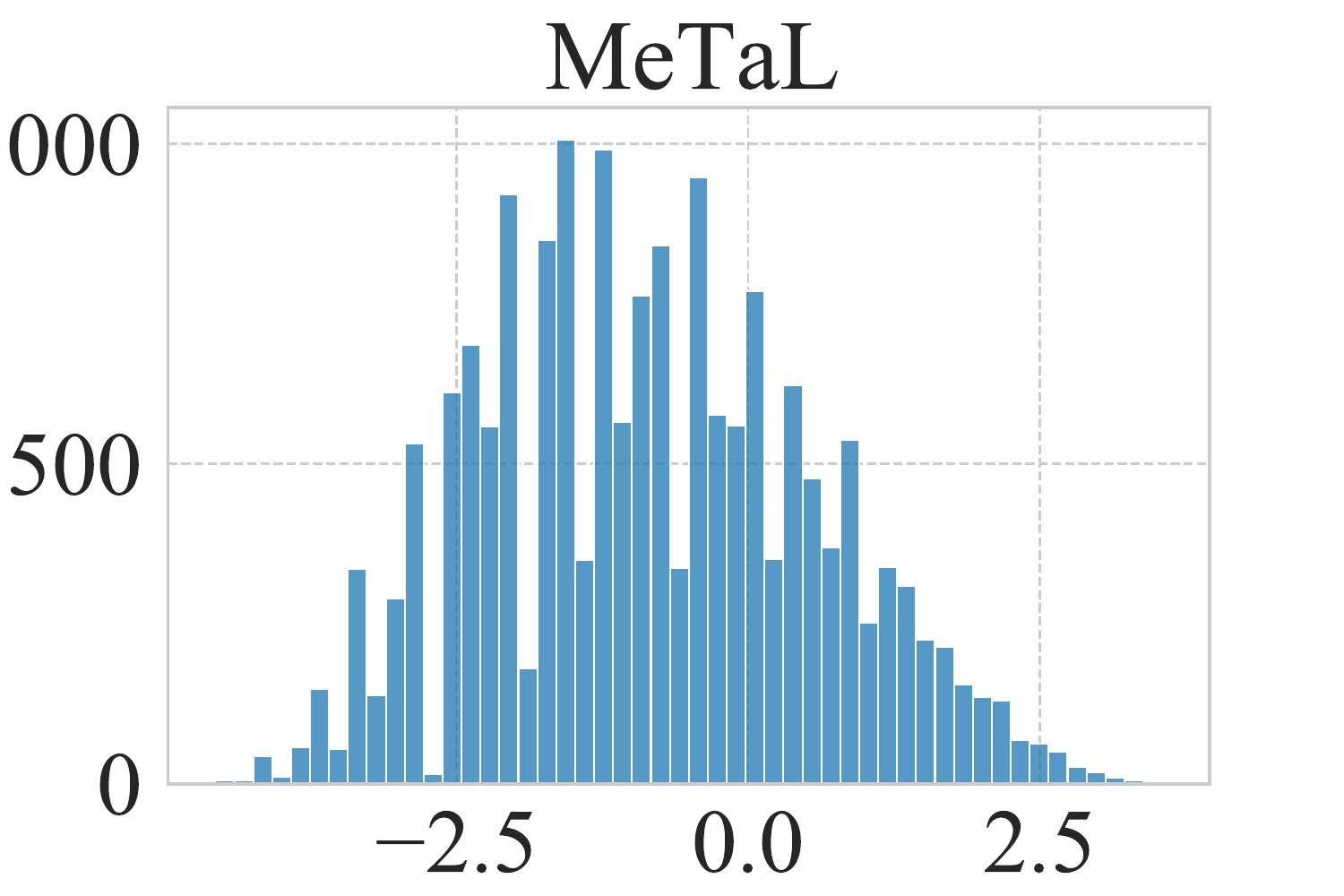}
\end{subfigure}
\caption{Histograms for the cut statistic score $Z_i$ on IMDb using BERT as $\phi$.}
\end{figure}

\section{Cut statistic code}
\label{sec:cut-statistic-code}
For simplicity in computing the graph $G$ for the cut statistic, we
provide code where the neighborhoods sets $N(i)$ are not necessarily
symmetric, so $i \in N(j) \centernot\implies j \in N(i)$.
This does not change the empirical performance of the algorithm.

\begin{minted}{python}
import torch

def get_conf_inds(labels, features, coverage, device='cuda'):
    features = torch.FloatTensor(features).to(device)
    labels = torch.LongTensor(labels).to(device)

    # move to CPU for memory issues on large dset
    pairwise_dists = torch.cdist(features, features, p=2).to('cpu')

    N = labels.shape[0]
    dists_sorted = torch.argsort(pairwise_dists)
    neighbors = dists_sorted[:,:20]
    dists_nn = pairwise_dists[torch.arange(N)[:,None], neighbors]
    weights = 1/(1 + dists_nn)

    neighbors = neighbors.to(device)
    dists_nn = dists_nn.to(device)
    weights = weights.to(device)

    cut_vals = (labels[:,None] != labels[None,:]).long()
    cut_neighbors = cut_vals[torch.arange(N)[:,None], neighbors]
    Jp = (weights * cut_neighbors).sum(dim=1)

    weak_counts = torch.bincount(labels)
    weak_pct = weak_counts / weak_counts.sum()

    prior_probs = weak_pct[labels]
    mu_vals = (1-prior_probs) * weights.sum(dim=1)
    sigma_vals = prior_probs * (1-prior_probs) * torch.pow(weights, 2).sum(dim=1)
    sigma_vals = torch.sqrt(sigma_vals)
    normalized = (Jp - mu_vals) / sigma_vals

    normalized = normalized.cpu()
    inds_sorted = torch.argsort(normalized)

    N_select = int(coverage * N)
    conf_inds = inds_sorted[:N_select]
    conf_inds = list(set(conf_inds.tolist()))
    return conf_inds
\end{minted}

\end{document}